\def\cnf{{\tt CNF}}
\def\dnf{{\tt DNF}}
\def\dt{{\tt DT}}
\def\Dnnf{{\tt DNNF}}
\def\c2d{{\tt C2D}}
\def\mc2d{{\tt mini-C2D}}
\def\d4{{\tt D4}}
\def\L{$\mathcal{L}$}
\def\co{$\mathbf{CO}$}
\def\se{$\mathbf{SE}$}
\def\im{$\mathbf{IM}$}
\def\eq{$\mathbf{EQ}$}
\def\ct{$\mathbf{CT}$}
\def\me{$\mathbf{ME}$}
\def\mc{$\mathbf{MC}$}
\def\opt{$\mathbf{OPT}$}
\def\cd{$\mathbf{CD}$}
\def\fo{$\mathbf{FO}$}
\def\nc{\mathbf{\neg C}}
\newtheorem{prop}{Proposition}
\newtheorem{proposition}[prop]{Proposition}
\newtheorem{defn}{Definition}
\newtheorem{definition}[defn]{Definition}
\def\moverlay{\mathpalette\mov@rlay}
\def\mov@rlay#1#2{\leavevmode\vtop{%
   \baselineskip\z@skip \lineskiplimit-\maxdimen
   \ialign{\hfil$\m@th#1##$\hfil\cr#2\crcr}}}
\newcommand{\charfusion}[3][\mathord]{
    #1{\ifx#1\mathop\vphantom{#2}\fi
        \mathpalette\mov@rlay{#2\cr#3}
      }
    \ifx#1\mathop\expandafter\displaylimits\fi}
\newcommand{\bigcupdot}{\charfusion[\mathop]{\bigcup}{\cdot}}
\newcommand{\argmin}{\mathrm{argmin}}
\newcommand{\concept}[1]{\llbracket #1 \rrbracket}
\newcommand{\size}[1]{|#1|}
\renewcommand{\vec}[1]{\bm{#1}}
\newtheorem{example}{Example}
\title{On the Computational Intelligibility of Boolean Classifiers}%: Decision Trees, Decision Lists, Random Forests, and Boosted Trees
\author{
Gilles Audemard$^1$\and
Steve Bellart$^1$\and
Louenas Bounia$^1$\and
Frédéric Koriche$^1$\and \\
Jean-Marie Lagniez$^1$\and
Pierre Marquis$^{1, 2}$\\ %\footnote{Contact Author}$\\
\affiliations
$^1$CRIL, Université d'Artois \& CNRS, France\\
$^2$Institut Universitaire de France\\
\emails
\{audemard, bellart, bounia, koriche, lagniez, marquis\}@cril.fr
}
\begin{document}

\maketitle

\begin{abstract}
	In this paper, we investigate the \emph{computational intelligibility} of Boolean classifiers,
	characterized by their ability to answer XAI queries in polynomial time.
	The classifiers under consideration are decision trees, \dnf\ formulae, decision lists, decision rules, tree ensembles, and
	Boolean neural nets. Using $9$ XAI queries, including both explanation queries and verification queries,
	we show the existence of \emph{large intelligibility gap between the families of classifiers}. On the one hand, all the $9$ XAI queries
	are tractable for decision trees. On the other hand, none of them is tractable for \dnf\ formulae, decision lists, random forests, boosted decision trees,
	Boolean multilayer perceptrons, and binarized neural networks.
\end{abstract}

%%%%%%%%%%%%%%%%%%%%%%%%%%%%%%%%%%%%%%%%%%%%%%%%%%%%%%%%%%%%%%%%%%%%%%%%%%%%%%%%
\section{Introduction}\label{sec:intro}
%%%%%%%%%%%%%%%%%%%%%%%%%%%%%%%%%%%%%%%%%%%%%%%%%%%%%%%%%%%%%%%%%%%%%%%%%%%%%%%%

\emph{What is a good classifier?} Such a common question calls for the identification
of several criteria, in order to assess the quality of classifiers.
To this point, a key criterion for measuring the generalization ability of classifiers is \emph{accuracy}.
Given a probability distribution over data instances, the accuracy of a (Boolean) classifier
is defined by the probability of correctly labeling a random data instance.
In statistical learning, the underlying probability distribution is unknown,
and we only have access to a data sample for training the classifier.
The learning problem is thus cast as a stochastic optimization task: given a family of candidate
classifiers, often referred to as the concept class, find a classifier in the family that minimizes
the (possibly regularized) empirical error on the training sample. In practice, the classification accuracy
is estimated on test samples using evaluation metrics such as, for example, stratified cross-validation.

However, accuracy is not the sole criterion for choosing a classifier: in many real-world applications,
another important criterion is \emph{intelligibility}.
Roughly speaking, a classifier is intelligible if its predictions
can be \emph{explained} in understandable terms to a user, and if its behavior can
be \emph{verified} according to the user's expectations. The explainability requirement
is a legal issue in Europe since the implementation of the General Data Protection Regulation (EU) 2016/679 (“GDPR”)
on May 25th, 2018 \cite{DBLP:journals/aim/GoodmanF17}.  Accordingly, explainable and robust AI (XAI) has become a very active research
topic for the past couple of years (see e.g. \cite{Bunel2018,ShihCD18,Plumb2018,Ignatiev2019,ChenZS0BH19,Srinivasan2019,DBLP:conf/sat/ShihDC19,Crabbe2020,Horel2020,Jia2020,DBLP:conf/nips/0001GCIN20,Ramamurthy2020}).

\begin{table*}[ht]
	\begin{center}
		\begin{normalsize}
			\scalebox{0.8}{
			\begin{tabular}{|c|c|}
				\hline
				{\bf XAI query} & {\bf Description}                                                     \\
				\hline
				\hline
				EMC             & Enumerating Minimum-Cardinality explanations                          \\
				DPI             & Deriving one Prime Implicant explanation                              \\
				ECO             & Enumerating COunterfactual explanations                               \\
				\hline
				\hline
				CIN             & Counting the INstances associated with a given class                  \\
				EIN             & Enumerating the INstances associated with a given class               \\
				%CAM &   Counting the instances for which the classifier provides ambiguous outputs or remains mute\\
				%EAM &  Enumerating the instances for which the classifier provides ambiguous outputs or remains mute\\
				%MFR &  Measuring the FRequency of features in a given class\\
				IMA             & Identifying MAndatory features or forbidden features in a given class \\
				IIR             & Identifying IRrelevant features in a given class                      \\
				IMO             & Identifying MOnotone (or anti-monotone) features in a given class     \\
				%MCJ &  Measuring Closeness between classes for Jaccard similarity\\
				%MCH &  Measuring Closeness between classes for Hamming similarity\\
				MCP             & Measuring Closeness of a class to a Prototype                         \\
				\hline
			\end{tabular}
			}
			\caption{Some XAI queries.}\label{table:summary}
		\end{normalsize}
	\end{center}
\end{table*}

Despite the importance of both criteria, intelligibility appears to be much more difficult to circumscribe than accuracy.
Indeed, in the statistical learning literature, the generalization ability of classifiers has been formally characterized
through the prisms of learnability \cite{Valiant1984,Haussler1992}, uniform convergence \cite{Vapnik1998}, and algorithmic stability \cite{Shalev2010,Charles18}.
By contrast, the term ``intelligibility'' holds no agreed upon meaning,
since it depends on various desiderata for clarifying the classifier behavior in some practical situations \cite{Lipton2018}.
Yet, different \emph{forms} of intelligibility can be formalized, by focusing on the classifier ability to properly answer questions.
Such forms of intelligibility are reflected by explanation and verification queries introduced so far in the XAI literature.
Notably, a classifier should be equipped with \emph{explanation facilities} including, for example,
the ability to identify few relevant features which together are sufficient for predicting the label of a data instance.
Furthermore, the classifier should be \emph{amenable to inspection}, especially when the user has some expectations about
the behavior of the classifier, and she is interested in checking whether the classifier complies to those expectations.
For instance, in a loan classification problem, if a loan is granted to an applicant who does not have a high income,
then loan should not be denied when the income increases, provided that the other features are unchanged.
This expectation can be formalized using a verification query that checks the monotonic behavior
of the classifier on the feature related to the applicant's income.

Addressing such XAI queries requires the availability of inference algorithms for computing answers in reasonable time.
From this perspective, each query can be viewed as a property that a family of classifiers may offer or not:
it is offered when there exists a polynomial-time algorithm to answer the query from any classifier of the family, and it is not
when there is no such algorithm, unless {\sf P = NP}. In other words, the {\it computational intelligibility} of a family of classifiers
can be defined as the set of tractable XAI queries supported by the family, leading to an {\it intelligibility map} when several families of classifiers
are considered. Such an approach echoes the computational evaluation of KR languages
achieved in the knowledge compilation map \cite{DarwicheMarquis02}.
%\medskip

The aim of this paper is to pave the way for the computational intelligibility of Boolean classifiers.
As a baseline, we use $9$ XAI queries from those considered in \cite{Audemardetal20}, which are summarized in
Table~\ref{table:summary}: EMC, DPI, and ECO are explanation queries, and
CIN, EIN, IMA, IIR, IMO, MCP are verification queries.\footnote{Five additional
	verification queries, namely CAM, EAM, MFR, MCJ, MCH, are considered in \cite{Audemardetal20}, but
	they mainly trivialize or boils down to another query in the list when dealing with Boolean classifiers
	-- \cite{Audemardetal20} considers the more general case of multi-label classifiers, i.e., when more than two output classes are targeted.}
%
%The authors of \cite{Audemardetal20} have identified some conditions over the Boolean language \L\ used to encore the Boolean circuit $\rho$ associated with $\vec C$, that are sufficient to ensure the tractability of the query.
Based on this portfolio of XAI queries, we examine $7$ families of Boolean classifiers: decision trees, \dnf\ formulae, decision lists,
random forests, boosted decision trees, Boolean multilayer perceptrons, and Boolean neural networks.
The main contribution of this paper lies in a number of complexity results establishing the existence
of a \emph{large intelligibility gap between families of classifiers}, estimated by the number of XAI queries (over $9$) which are tractable.
Specifically, we prove that all XAI queries are tractable for the family of decision trees, while none of them
is tractable for  \dnf\ formulae, decision lists, random forests, boosted trees, Boolean multilayer perceptrons, or of binarized neural networks.

%\medskip

The rest of the paper is organized as follows. In Section \ref{sec:formal} is reported the necessary background about Boolean functions
and their representations. In Section \ref{sec:classifiers} the $7$ families of Boolean classifiers examined in this study are presented.
The $9$ XAI queries summarized in Table~\ref{table:summary} are presented in formal terms in Section \ref{sec:XAIqueries}.
%The decision problems associated with these queries are shown computationally hard when no restriction are imposed on the Boolean classifier.
Results are provided in Section \ref{sec:complexity}: for each of the $9$ XAI
queries and each  of the $7$ families of classifiers, we determine whether the family offers or not the query.
Finally, Section \ref{sec:concl} concludes the paper and presents some perspectives for further research.
\section{Formal Preliminaries}\label{sec:formal}
%%%%%%%%%%%%%%%%%%%%%%%%%%%%%%%%%%%%%%%%%%%%%%%%%%%%%%%%%%%%%%%%%%%%%%%%%%%%%%%%

For a positive integer $n$, let $[n]$ to denote the set $\{1,\cdots,n\}$.
Let $\mathcal F_n$ be the set of all Boolean functions from $\mathbb{B}^n$ into $\mathbb{B}$, where $\mathbb{B} = \{0,1\}$.
Any member $f$ of $\mathcal F_n$ is called a \emph{concept}, and any subset $\mathcal F$ of $\mathcal F_n$ is called a \emph{concept class}.
Any vector $\vec x$ in the Boolean hypercube $\mathbb{B}^n$ is called an \emph{instance}; $\vec x$ is a positive example
(or \emph{model}) of some concept $f$
if $f(\vec x) = 1$, and $\vec x$ is a negative example of $f$ if $f(\vec x) = 0$.
In what follows, we use $\top$ and $\bot$ to denote the concepts respectively given by
$\top(\vec x) = 1$ and $\bot(\vec x) = 0$ for all $\vec x \in \mathbb{B}^n$.

\medskip

Borrowing the terminology of computational learning theory  \cite{Kearns1994},
a \emph{representation class} (or \emph{language}) for a concept class $\mathcal F$ is a set of strings $\mathcal R$
defined over some (possibly infinite) alphabet of symbols. $\mathcal R$ is associated with two surjective functions, namely,
a mapping $\concept{\cdot} : \mathcal R \rightarrow \mathcal F$, called \emph{representation scheme},
and a mapping $\size{\cdot}: \mathcal R \rightarrow \mathbb N$, capturing the size of each representation.
Any string $\rho \in \mathcal R$ for which $\concept{\rho} = f$ is called a \emph{representation}
of the concept $f$.

A wide spectrum of representation classes have been proposed in the literature
for encoding Boolean functions in a compact way. Among them, \emph{propositional} languages are
defined over a set $X_n = \{x_1,\cdots, x_n\}$ of Boolean variables, the constants $1$ (true)
and $0$ (false), and the Boolean connectives $\neg$ (negation), $\lor$ (disjunction) and $\land$ (conjunction).
A \emph{literal} is a variable $x_i$ or its negation $\neg x_i$ (also denoted $\overline x_i$),
a \emph{term} or \emph{monomial} is a conjunction of literals, and a \emph{clause}
is a disjunction of literals. 
In such a setting, a vector $\vec x \in \mathbb{B}^n$ is also viewed as a term $\bigwedge_{i=1}^n \ell_i$,
where for each $i \in [n]$, $\ell_i = x_i$ if the $i$th coordinate of $\vec x$ is $1$, and $\ell_i = \overline{x_i}$ if the $i$th coordinate of $\vec x$ is $0$.
A \cnf\ formula is a finite conjunction of clauses. 
%For any $k \in \mathbb{N}$, a $k$-\cnf\ formula is a \cnf\ formula where each clause contains at most $k$ literals.

For propositional languages, the representation scheme is defined according to the standard semantics of propositional logic.
As an example, for the concept class of monomials, each representation is a term
$t = \ell_1 \land \cdots \land \ell_k$, and the corresponding concept is:
$$
t(\vec x) = \prod_{j = 1}^k \ell_j(\vec x)
$$
\begin{align*}
	\mbox{ where }
	\begin{cases}
		\ell_j(\vec x) = x_j     & \mbox{ if } \ell_j = x_j,           \\
		\ell_j(\vec x) = 1 - x_j & \mbox{ if } \ell_j = \overline x_j.
	\end{cases}
\end{align*}
%The size of a monomial $t$ is simply defined by the number of its literals.

As an alternative to propositional languages conveying a logical interpretation of Boolean functions,
\emph{neural} representation languages are endowed with a geometrical interpretation of concepts \cite{Anthony2001}.
The simplest neural representation language is the family of \emph{linear threshold functions} of the form
$f = (\vec w, \tau) \in \mathbb R^{n+1}$. For this language, the representation scheme maps $f$ to the concept:
\begin{align}
	\label{eq:LTU}
	f(\vec x) = \mathbb{1}[w_1x_1 + \cdots + w_nx_n \geq \tau]
\end{align}
where $\mathbb{1}[p] = 1$ if $p$ is true, and $\mathbb{1}[p] = 0$ if $p$ is false. 
%Linear threshold functions can naturally be extended to \emph{polynomial threshold functions} by replacing Boolean variables in (\ref{eq:LTU}) with monomials of fixed degree. 
These threshold units can be further
generalized to \emph{feedforward neural networks}, examined at the end of the next section. %\textcolor{green}{next} section.

Let $\mathcal R$ be a representation language, and $\concept{\cdot}$ be a representation scheme mapping $\mathcal R$
into some concept class $\mathcal F$. For a representation $\rho \in \mathcal R$,
we use $\mathit{Var}(\rho)$ to denote the set of Boolean variables occurring in $\rho$.
The \emph{set} of models of $\rho$, given by $\concept{\rho}^{-1}(1)$ is denoted $\mathrm{mods}(\rho)$.
Whenever $\vec x$ belongs to $\mathrm{mods}(\rho)$, one also writes $\vec x \models \rho$.
Two representations $\rho$ and $\rho'$ of $\mathcal R$ are said to be \emph{equivalent}, denoted $\rho \equiv \rho'$, if $\concept{\rho} = \concept{\rho'}$.
We also say that $\rho$ \emph{entails} $\rho'$, denoted $\rho \models \rho'$, if $\mathrm{mods}(\rho) \subseteq \mathrm{mods}(\rho')$. %$\concept{R} \leq \concept{R'}$.
A representation $\rho$ is \emph{inconsistent} if $\concept{\rho} = \bot$ and \emph{valid} if $\concept{\rho} = \top$.

\section{Boolean Classifiers}\label{sec:classifiers}
%%%%%%%%%%%%%%%%%%%%%%%%%%%%%%%%%%%%%%%%%%%%%%%%%%%%%%%%%%%%%%%%%%%%%%%%%%%%%%%%

Based on elementary notions given in the previous section, we will focus on the concept class $\mathcal F = \mathcal F_n$.
In other words, all representation languages examined in this study are expressive enough to cover any Boolean function over $n$ variables.
In what follows, a \emph{Boolean classifier} is simply a representation of some concept in $\mathcal F_n$, according to some representation
language $\mathcal R$, associated with its representation scheme and its size measure.

For illustration, the following toy example will be used throughout the paper as a running example:

\begin{example}\label{running-ex}
The focus is laid on the concept of {\it common hollyhocks} (alias {\it alcea rosea}).
One needs a Boolean classifier to characterize it, i.e., to separate common hollyhocks from other roses using the following four features:
$x_1$: ``has a deciduous foliage'', $x_2$: ``has heart-shaped leaves'', $x_3$: ``has large flowers',  and $x_4$: ``has a light green stem''.
The concept $f \in \mathcal F_4$ of {\it common hollyhocks} is given by the set of its positive instances 
$\{(1, 0, 1, 1), (1, 1, 0, 0), (1, 1, 0, 1), (1, 1, 1, 0), (1, 1, 1, 1)\}$.
%\begin{itemize}
%\item $x_1$: ``has a deciduous foliage'';
%\item $x_2$: ``has heart-shaped leaves'';
%\item $x_3$: ``has large flowers'';
%\item $x_4$: ``has a light green stem''.
%\end{itemize}
%The concept $f \in \mathcal F_4$ of {\it common hollyhocks} can be represented by the following Karnaugh map:
%
%\begin{center}
%  \scalebox{0.6}{\input{examples/karnaugh.tex}}
%\end{center}

%This concept can be represented in many different ways, as we will see next.
\end{example}

% A \emph{Boolean classifier} $\rho$ is the representation of a Boolean function over a set $X = \{x_1,\ldots, x_n\}$ of Boolean features.
% Many families of classifiers can be considered. For each of them, many learning algorithms for generating classifiers from those families
% have been designed. \pierre{Add references} They differ as to their succinctness (i.e., their capacity to represent the corresponding Boolean function
% using little space), but also as to their learnability \pierre{Add explanations and references}. In the following, we focus on well-known families of
% classifiers, namely decision trees, $k$-\dnf\ classifiers, decision lists, random forests, boosted trees, and Boolean neural nets.

\paragraph{\dnf\ formulae.}
Arguably, the simplest language for representing in intuitive terms any Boolean function is
the class of \dnf\ formulae, which has been extensively studied in machine learning \cite{Valiant1985,Pitt1988,Feldman2009}.
A \dnf\ formula is a finite disjunction of monomials $D = t_1 \lor t_2 \lor \cdots \lor t_m$,
and its associated concept is $D(\vec x) = \max_{i = 1}^m t_i(\vec x)$.
%\begin{align*}
%	D(\vec x) = \max_{i = 1}^m t_i(\vec x)
%\end{align*}
As usual, the size of a \dnf\ formula is defined by the sum of sizes of its terms,
where the size of a term is simply given by the number of its literals.

\begin{example}
The concept of common hollyhocks can be represented by:
$$
\begin{array}{ll}
    D = &  \textcolor{white}{\vee}  
    	(x_1 \wedge x_2 \wedge \overline{x}_3) \vee  (x_1 \wedge x_2 \wedge x_3 \wedge x_4) \\
        &  \vee  (x_1 \wedge x_2 \wedge \overline{x}_4) \vee  (x_1 \wedge \overline{x}_2 \wedge x_3 \wedge x_4)
\end{array} 
$$
%$$D = (x_1 \wedge x_2 \wedge \overline{x}_3) \vee  (x_1 \wedge x_2 \wedge x_3 \wedge x_4) \vee  (x_1 \wedge x_2 \wedge \overline{x}_4) \vee  (x_1 \wedge \overline{x}_2 \wedge x_3 \wedge x_4).$$
%$$
%\begin{array}{ll}
%    D = &  \textcolor{white}{\vee}  
%%    	\textcolor{green}{(x1 \wedge x_2)} \\
%    		(x_1 \wedge x_2 \wedge \overline{x}_3) \\
%        &  \vee  (x_1 \wedge x_2 \wedge x_3 \wedge x_4) \\
%        &  \vee  (x_1 \wedge x_2 \wedge \overline{x}_4) \\
%        &  \vee  (x_1 \wedge \overline{x}_2 \wedge x_3 \wedge x_4)
%\end{array} 
%$$
\iffalse
$ D = (x_1 \wedge x_2 \wedge \overline{x}_3) \vee 
 (x_1 \wedge x_2 \wedge x_3 \wedge x_4) 
\vee (x_1 \wedge x_2 \wedge \overline{x}_4) \vee (x_1 \wedge \overline{x}_2 \wedge x_3 \wedge x_4)$
\fi
\end{example}

\paragraph{Decision Lists.}
The aforementioned \dnf\ formulae can be generalized to \emph{rule models}, which have received
a great deal of attention in the literature of machine learning and knowledge discovery (see e.g. \cite{Flach2012,Furnkranz2012} for general surveys).
Notably, \emph{decision lists} \cite{Rivest87} are ordered multi-sets of rules of the form
$L = \langle t_1, c_1\rangle, \ldots, \langle t_m, c_m\rangle$, where each $t_i$ ($i \in [m]$) is a term over $X_n$,
and each $c_i$ is a Boolean value in $\mathbb{B}$.
An input instance $\vec x \in \mathbb{B}^n$ is a model of $L$ if the class $c_i$ of the first rule $t_i$ that is matched on $\vec x$ is positive.
By convention, the last rule $t_m$ is the empty term $\top$. Formally, $L(\vec x) = c_j$ where $j = \argmin_{i = 1}^m \{t_i(\vec x) = 1\}$.
%$$
%L(\vec x) = c_j 
%$$
%\begin{align*}
%	\mbox{ where } j = \argmin_{i = 1}^m \{t_i(\vec x) = 1\}
%\end{align*}
%By analogy with \dnf\ formulae, t
The size of a decision list $L$ is the sum of the sizes
of the terms occurring in $L$.

\begin{example}
The concept of common hollyhocks can be represented by
$L = \langle x_1 \wedge x_2, 1\rangle, \langle \overline{x}_1, 0\rangle, 
\langle x_3 \wedge x_4, 1\rangle, \langle \top, 0\rangle$.
%$$L = \langle x_1 \wedge x_2, 1\rangle, \langle \overline{x}_1, 0\rangle, 
%\langle x_3 \wedge x_4, 1\rangle, \langle \top, 0\rangle$$
\end{example}

\setcounter{figure}{0}
\begin{figure}[h]
  \centering
  \centering
  \scalebox{0.7}{     \begin{tikzpicture}[scale=1,
roundnode/.style={circle, draw=gray!60, fill=gray!5, very thick, minimum size=7mm},
squarednode/.style={rectangle, draw=red!60, fill=red!5, very thick, minimum size=5mm},
]
%Nodes
\node[roundnode] (n) at (1,0) {$x_2$};
\node[roundnode] (n1) at (-1,-1) {$x_1$};
\node[squarednode] (n11) at (-2,-2.5) {$0$};
\node[roundnode] (n12) at (0,-2.5) {$x_3$};
\node[squarednode] (n121) at (-1,-4) {$0$};
\node[roundnode] (n122) at (1,-4) {$x_4$};
\node[squarednode] (n1221) at (0,-5.5) {$0$};
\node[squarednode] (n1222) at (2,-5.5) {$1$};
\node[roundnode] (n2) at (3,-1) {$x_1$};
\node[squarednode] (n21) at (2,-2.5) {$0$};
\node[squarednode] (n22) at (4,-2.5) {$1$};

%Lines
\draw[->,dashed] (n) -- (n1);
\draw[->] (n) -- (n2);

\draw[->,dashed] (n1) -- (n11);
\draw[->] (n1) -- (n12);

\draw[->,dashed] (n12) -- (n121);
\draw[->] (n12) -- (n122);

\draw[->,dashed] (n122) -- (n1221);
\draw[->] (n122) -- (n1222);

\draw[->,dashed] (n2) -- (n21);
\draw[->] (n2) -- (n22);
\end{tikzpicture}}
  \caption{A decision tree representation of the concept of common hollyhocks.}
  \label{fig:dt}  
\end{figure}
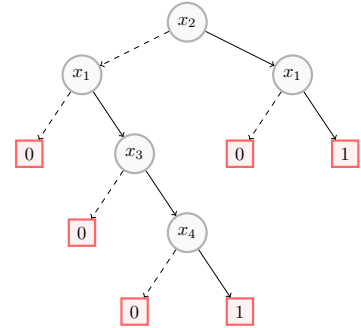
\setcounter{figure}{2}

\paragraph{Decision Trees.}
Tree models are among the most popular representations in machine learning.
In particular, \emph{decision trees} \cite{Breiman1984,Quinlan1986} are models of paramount importance
in XAI, as they can be easily read by recursively breaking a choice into sub-choice until a decision is reached.
Formally, a (Boolean) decision tree is a binary tree $T$, where each internal node is labeled with a Boolean variable
in $\mathcal X_n$, and each leaf is labeled $0$ or $1$. Without loss of generality, every variable is assumed to appear
at most once on any root-to-leaf path (this is called the \emph{read-once} property). The value $T(\vec x)$ of $T$ on
an input instance $\vec x \in \mathbb{B}^n$ is given by the leaf reached from the root as follows: for each internal node
labeled by $x_i$, go to the left or right child depending on whether the corresponding value $x_i$ of $\vec x$ is $0$ or $1$, respectively.
The size of $T$ is given by the number of its nodes.

%\newpage

\begin{example}
The concept of common hollyhocks can be represented by the decision tree $T$ in Figure \ref{fig:dt}.
\end{example}

% A (Boolean) {\em decision tree} (here, a classification tree) \cite{DBLP:books/wa/BreimanFOS84,DBLP:journals/ml/Quinlan86} over $X$ 
% is a finite tree $T$ where each internal node is a decision node $N$ labelled by a feature $x$ from $X$ and each leaf node is $0$ or $1$. 
% A decision node $N$ of $T$ labelled by $v \in X$ has the form $\mathit{ite}(v, T_l, T_r)$ where $T_l$ is the left child of $N$ in $T$ and
% $T_r$ is the right child of $N$ in $T$. Every path of $T$ is a finite sequence of decision nodes, ending with a leaf node. 
% In every path of $T$, one cannot find twice or more a decision node
% labelled by the same variable $x \in X$ (this is called the read-once property).
% Every path of $T$ corresponds to a partial assignment over $X$ such that when the successor of a decision node labelled by $x$ in the path
% is its left child (resp. right child), variable $x$ is set to $0$ (resp. $1$) in the partial assignment.
% $T$ classifies an input $\vec x \in X$  as a positive instance 
% if and only if the unique path of $T$ that is compatible with $\vec x$ leads to a $1$-leaf.

%\paragraph{\cnf\ classifiers} A \emph{\cnf\ formula} $C$ is a finite conjunction of clauses over $X$.
%%, and a \emph{$k$-\cnf\ classifier} \pierre{Add references} is a \cnf\ formula where each clause contains at most $k$ literals. 
%$C$ classifies an input $\vec x \in X$  as a positive instance if and only if
%$\vec x$ is a model of $C$.

\setcounter{figure}{1}
\begin{figure*}[t]
  \centering
  \scalebox{0.7}{
    \begin{tikzpicture}[scale=0.7,
roundnode/.style={circle, draw=gray!60, fill=gray!5, very thick, minimum size=7mm},
squarednode/.style={rectangle, draw=red!60, fill=red!5, very thick, minimum size=5mm},
]
% first tree
%Nodes
\node[roundnode] (n) at (2,0.5) {$x_1$};
\node[squarednode] (n1) at (1,-1) {$0$};
\node[roundnode] (n2) at (3,-1) {$x_2$};
\node[squarednode] (n21) at (2,-2.5) {$0$};
\node[roundnode] (n22) at (4,-2.5) {$x_3$};
\node[squarednode] (n221) at (3,-4) {$1$};
\node[roundnode] (n222) at (5,-4) {$x_4$};
\node[squarednode] (n2221) at (4,-5.5) {$0$};
\node[squarednode] (n2222) at (6,-5.5) {$1$};

%Lines
\draw[->,dashed] (n) -- (n1);
\draw[->] (n) -- (n2);
\draw[->,dashed] (n2) -- (n21);
\draw[->] (n2) -- (n22);
\draw[->,dashed] (n22) -- (n221);
\draw[->] (n22) -- (n222);
\draw[->,dashed] (n222) -- (n2221);
\draw[->] (n222) -- (n2222);

% second tree
%Nodes
\node[roundnode] (m) at (13,0.5) {$x_2$};
\node[roundnode] (m1) at (9,-1) {$x_1$};
\node[squarednode] (m11) at (8,-2.5) {$0$};
\node[roundnode] (m12) at (10,-2.5) {$x_3$};
\node[squarednode] (m121) at (9,-4) {$0$};
\node[roundnode] (m122) at (11,-4) {$x_4$};
\node[squarednode] (m1221) at (10,-5.5) {$0$};
\node[squarednode] (m1222) at (12,-5.5) {$1$};
\node[roundnode] (m2) at (17,-1) {$x_3$};
\node[roundnode] (m21) at (15,-2.5) {$x_1$};
\node[roundnode] (m22) at (19,-2.5) {$x_1$};
\node[squarednode] (m211) at (14,-4) {$0$};
\node[squarednode] (m212) at (16,-4) {$1$};
\node[squarednode] (m221) at (18,-4) {$0$};
\node[squarednode] (m222) at (20,-4) {$1$};

%Lines
\draw[->,dashed] (m) -- (m1);
\draw[->] (m) -- (m2);
\draw[->,dashed] (m1) -- (m11);
\draw[->] (m1) -- (m12);
\draw[->,dashed] (m12) -- (m121);
\draw[->] (m12) -- (m122);
\draw[->,dashed] (m122) -- (m1221);
\draw[->] (m122) -- (m1222);
\draw[->,dashed] (m2) -- (m21);
\draw[->] (m2) -- (m22);
\draw[->,dashed] (m21) -- (m211);
\draw[->] (m21) -- (m212);
\draw[->,dashed] (m22) -- (m221);
\draw[->] (m22) -- (m222);

% third tree
%Nodes
\node[roundnode] (l) at (23,0.5) {$x_3$};
\node[squarednode] (l1) at (22,-1) {$0$};
\node[roundnode] (l2) at (24,-1) {$x_2$};
\node[roundnode] (l21) at (23,-2.5) {$x_4$};
\node[squarednode] (l22) at (25,-2.5) {$1$};
\node[squarednode] (l211) at (22,-4) {$0$};
\node[squarednode] (l212) at (24,-4) {$1$};

\draw[->,dashed] (l) -- (l1);
\draw[->] (l) -- (l2);
\draw[->,dashed] (l2) -- (l21);
\draw[->] (l2) -- (l22);
\draw[->,dashed] (l21) -- (l211);
\draw[->] (l21) -- (l212);

\end{tikzpicture}
  }
  \caption{A random forest representation of the concept of common hollyhocks.}
  \label{fig:rf}
\end{figure*}
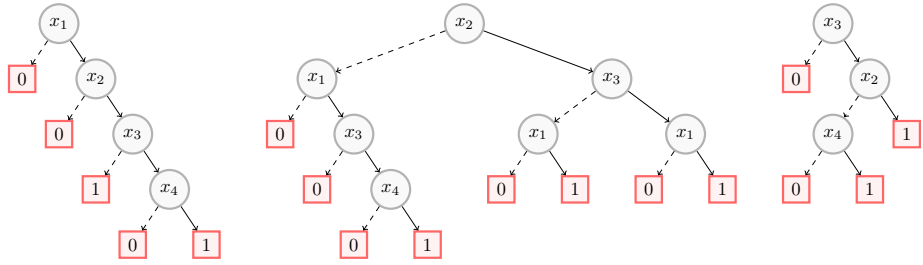

\paragraph{Random Forests.} Tree models can be generalized to \emph{tree ensembles}, using ensemble learning techniques, such as
bagging and boosting. Notably, the \emph{random forest} method generates multiple decision trees according to a variant of bagging
\cite{DBLP:journals/ml/Breiman96b,DBLP:journals/ml/Breiman01}. The output representation is a multi-set
$F = \{T_1,\cdots,T_m\}$
of decision trees, and the corresponding concept is given by:
\begin{align*}
	F(\vec x) =
	\begin{cases}
		1 & \mbox{ if } \sum_{i=1}^m T_i(\vec x) > \frac{m}{2} \\
		0 & \mbox{ otherwise.}
	\end{cases}
\end{align*}
In other words, an input instance $\vec x$ is a model of $F$ if and only if a strict majority of trees in $F$ classifies
$\vec x$ as a positive example. The size of $F$ is defined by the sum of sizes of the decision trees occurring in $F$.

\begin{example}
The concept of common hollyhocks can be represented by the random forest in Figure~\ref{fig:rf}.
\end{example}

\paragraph{Boosted Trees.}
Tree ensembles can also be trained using the boosting technique \cite{DBLP:journals/ml/Schapire90,DBLP:conf/eurocolt/FreundS95,SF2012} 
%\cite{Friedman2001} or its variants \cite{Friedman2002,ChenGuestrin2016},
in order to yield \emph{boosted trees}, which are multi-sets of the form
$B = \{\langle T_1, \alpha_1\rangle, \ldots, \langle T_m, \alpha_m\rangle\}$,
where each $T_i \, (i \in [m])$ is a decision tree and $\vec \alpha$ is a convex combination of coefficients.\footnote{In other words, $\alpha_i \geq 0$ for all $i \in [m]$ and $\sum_i \alpha_i = 1$.}
By analogy with random forests, the decisions made by boosted trees are given from a weighted majority vote:
\begin{align*}
	B(\vec x) =
	\begin{cases}
		1 & \mbox{ if } \sum_{i=1}^m \alpha_i T_i(\vec x) > \frac{1}{2} \\
		0 & \mbox{ otherwise.}
	\end{cases}
\end{align*}
The size of the tree ensemble $B$ is the sum of the sizes of its trees.

\begin{example}
The concept of common hollyhocks can be represented by the boosted tree 
 in Figure \ref{fig:gbt}.  %$G = \{\langle T_1, 0.5\rangle, \langle T_2, 0.25\rangle, \langle T_3, 0.25\rangle\}$.
\end{example}

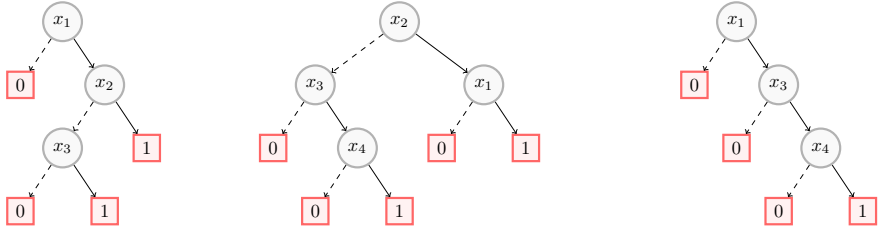
\begin{figure*}[t]
  \centering
  \scalebox{0.7}{\begin{tikzpicture}[scale=0.8,
roundnode/.style={circle, draw=gray!60, fill=gray!5, very thick, minimum size=7mm},
squarednode/.style={rectangle, draw=red!60, fill=red!5, very thick, minimum size=5mm},
]
% first tree
\node[roundnode] (l) at (0,0.5) {$x_1$};
\node[squarednode] (l1) at (-1,-1) {$0$};
\node[roundnode] (l2) at (1,-1) {$x_2$};
\node[roundnode] (l21) at (0,-2.5) {$x_3$};
\node[squarednode] (l22) at (2,-2.5) {$1$};
\node[squarednode] (l211) at (-1,-4) {$0$};
\node[squarednode] (l212) at (1,-4) {$1$};

\draw[->,dashed] (l) -- (l1);
\draw[->] (l) -- (l2);
\draw[->,dashed] (l2) -- (l21);
\draw[->] (l2) -- (l22);
\draw[->,dashed] (l21) -- (l211);
\draw[->] (l21) -- (l212);

% second tree
\node[roundnode] (n) at (8,0.5) {$x_2$};
\node[roundnode] (n1) at (6,-1) {$x_3$};
\node[roundnode] (n2) at (10,-1) {$x_1$};
\node[squarednode] (n11) at (5,-2.5) {$0$};
\node[roundnode] (n12) at (7,-2.5) {$x_4$};
\node[squarednode] (n121) at (6,-4) {$0$};
\node[squarednode] (n122) at (8,-4) {$1$};
\node[squarednode] (n21) at (9,-2.5) {$0$};
\node[squarednode] (n22) at (11,-2.5) {$1$};

\draw[->,dashed] (n) -- (n1);
\draw[->] (n) -- (n2);
\draw[->,dashed] (n1) -- (n11);
\draw[->] (n1) -- (n12);
\draw[->,dashed] (n12) -- (n121);
\draw[->] (n12) -- (n122);
\draw[->,dashed] (n2) -- (n21);
\draw[->] (n2) -- (n22);

% third tree
\node[roundnode] (m) at (16,0.5) {$x_1$};
\node[squarednode] (m1) at (15,-1) {$0$};
\node[roundnode] (m2) at (17,-1) {$x_3$};
\node[squarednode] (m21) at (16,-2.5) {$0$};
\node[roundnode] (m22) at (18,-2.5) {$x_4$};
\node[squarednode] (m221) at (17,-4) {$0$};
\node[squarednode] (m222) at (19,-4) {$1$};

\draw[->,dashed] (m) -- (m1);
\draw[->] (m) -- (m2);
\draw[->,dashed] (m2) -- (m21);
\draw[->] (m2) -- (m22);
\draw[->,dashed] (m22) -- (m221);
\draw[->] (m22) -- (m222);

\end{tikzpicture}}
  \caption{\label{fig:gbt}A boosted tree representation of the concept of common hollyhocks. Weights of trees are respectively 0.5, 0.25 and 0.25.
  }
\end{figure*}

\paragraph{Boolean Multilayer Perceptrons.}
Based on the linear threshold units presented above, a neural network is formed when we place units 
at the vertices of a directed graph, with the arcs of the digraph describing the signal flows between units. 
More formally, a \emph{feedforward} neural network is defined by a directed acyclic graph $(V,E)$, and 
a weight function over the edges: $w: E \rightarrow \mathbb R$. Each node $v \in V$ of the graph captures a neuron.
%modeled as a simple scalar function $\sigma_v: \mathbb R \rightarrow \mathbb R$, called \emph{activation function}.
In a \emph{multilayer} neural network, the set of nodes is decomposed into a union 
of (nonempty) disjoint subsets $V = \bigcupdot_{l=1}^d V_l$, 
such that every edge in $E$ connects every node in $V_{l}$ to every node in $V_{l+1}$, for some $l \in [d-1]$.    
Accordingly, every neuron $v \in V$ corresponds to a pair $l, i$ where $l \in [d]$ is a layer, and $i \in [\size{V_l}]$ is a rank in layer $l$.
The bottom layer $V_1$ is called the input layer and contains $n$ vertices.
The layers $V_2,\cdots,V_{d-1}$ are called \emph{hidden} layers, and the top layer $V_d$ is called the output layer.  
%Let $v_{l,i}$ denote the $i$th neuron of the $l$th layer. 
The inputs of  the $i$th neuron of the $l$th layer with $1 < l \leq d$ are the outputs of all the neurons
from layer $l-1$, plus an additional input $b_{l,i} \in \mathbb R$, called the bias.
We denote by $o_{l,i}(\vec x)$ the output of the $i$th neuron of the $l$th layer when the network 
is fed with the data instance $\vec x \in \mathbb R^n$. With this notation in hand, 
a multilayer neural network is recursively specified as follows:
\begin{align*}
o_{1,i}(\vec x) & = x_i \\
o_{l,i}(\vec x) & =  \mathit{sgn} \left( \sum_{j: (v_{l-1,j},v_{l,i}) \in E} w(v_{l-1,j},v_{t,i}) o_{l-1,j}(\vec x) + b_{l,i}\right)    
\end{align*}
\noindent where $\mathit{sgn}$ is the sign function such that $\mathit{sgn}(z) = \mathbb{1}[z \geq 0]$.
The depth, width, and size of the neural network are given by $d$, 
$\max_{l} \size{V_l}$, and $\size{V}$, respectively.
In a \emph{Boolean multilayer perceptron}, also known as \emph{Boolean multilayer threshold network} $P$
\cite{Anthony2001}, the input instances are vectors in the hypercube $\mathbb{B}^n$, 
and the output layer consists of a single neuron for which the output, denoted $P(\vec x)$, is a Boolean value in $\mathbb{B}$.
%In a \emph{Boolean} multilayer neural network $N$, the input instances are vectors in the hypercube $\mathbb{B}^n$, 
%and the output layer consists of a single neuron for which the output, denoted $N(\vec x)$, is a Boolean value in $\mathbb{B}$.
%Finally, a \emph{Boolean multilayer perceptron}, also known as \emph{Boolean multilayer threshold network} 
%\cite{Anthony2001}, is a Boolean multilayer neural net $P$ for which the activation functions $\sigma_v$ are threshold functions, that is, 
%$\sigma_v(z) = \mathbb{1}[z \geq \tau_v]$ where $\tau_v \in\mathbb R$. 

\begin{example}
The concept of common hollyhocks can be represented by the Boolean multilayer perceptron $P$ in Figure~\ref{fig:perceptron}. 
The weight of each edge is attached as a label to the corresponding edge. The bias associated with each neuron in layer $2$ is $-1$, and the bias
associated with the unique neuron in layer $3$ is $-3$. For the sake of readability, the corresponding inputs are not represented explicitly, but
the bias associated with a neuron is written in the box representing the neuron in the figure. 
%Notation $\geq_{\tau_v}$ is a short for $\mathbb{1}[\cdot \geq \tau_v]$.
\end{example}
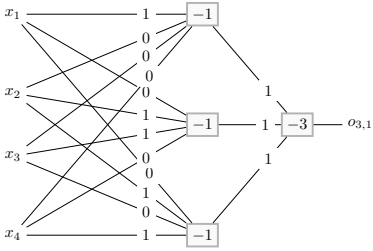
\begin{figure}[h]
  \centering
  \scalebox{0.6}{
    \begin{tikzpicture}[scale=0.7,
squarednode/.style={rectangle, draw=gray!60, fill=gray!5, very thick, minimum size=5mm},
weight/.style={near end,fill=white}
]
%Nodes
\node[squarednode] (c1) at (2,3.5) {$-1$};
\node[squarednode] (c2) at (2,0) {$-1$};
\node[squarednode] (c3) at (2,-3.5) {$-1$};
\node[squarednode] (co) at (5,0) {$-3$};
\node (o) at (7,0) {$o_{3,1}$};
\node (i1) at (-4,3.5) {$x_1$};
\node (i2) at (-4,1) {$x_2$};
\node (i3) at (-4,-1) {$x_3$};
\node (i4) at (-4,-3.5) {$x_4$};

\draw (i1) -- (c1) node [weight] {$1$};
\draw (i1) -- (c2) node [weight] {$0$};
\draw (i1) -- (c3) node [weight] {$0$};

\draw (i2) -- (c1) node [weight] {$0$};
\draw (i2) -- (c2) node [weight] {$1$};
\draw (i2) -- (c3) node [weight] {$1$};

\draw (i3) -- (c1) node [weight] {$0$};
\draw (i3) -- (c2) node [weight] {$1$};
\draw (i3) -- (c3) node [weight] {$0$};

\draw (i4) -- (c1) node [weight] {$0$};
\draw (i4) -- (c2) node [weight] {$0$};
\draw (i4) -- (c3) node [weight] {$1$};

\draw (c1) -- (co) node [weight] {$1$};
\draw (c2) -- (co) node [weight] {$1$};
\draw (c3) -- (co) node [weight] {$1$};

\draw (co) -- (o);

\end{tikzpicture}
  }
  \caption{A Boolean multilayer perceptron representation of the concept of common hollyhocks.}
  \label{fig:perceptron}
\end{figure}

\paragraph{Binarized Neural Networks.} 
Introduced in \cite{DBLP:conf/nips/HubaraCSEB16}, {\em binarized neural networks} are multilayer neural networks 
whose activations and weights are predominantly binary (but ranging in $\{-1,1\}$). 
A BNN is usually described in terms of composition of $d$ blocks of layers
(that are assembled sequentially) rather than individual layers. Thus, a BNN $N$ consists of a number (say, $m = d-1$) of internal blocks, followed by a unique output block, noted $O$.
Each block consists of a collection of linear and non-linear transformations. The $k$th internal block $BLK_k$ ($k \in [m]$) in a BNN can be modeled as a mapping
$$
	BLK_k: \{-1,1\}^{n_k} \rightarrow \{-1,1\}^{n_{k+1}} %\text{ on input } \vec{x}_k \in \{-1,1\}^{n_k} 
$$
associating with a vector of $n_k$ values in $\{-1,1\}$ a vector of $n_{k+1}$ values in $\{-1,1\}$.
The inputs of $BLK_1$ are the inputs of $N$ (thus, $n_1 = n$, the number of elements in $X_n$),
the outputs of $BLK_k$ ($k \in [m-1]$) are the inputs of $BLK_{k+1}$,
the output of $BLK_m$ is the input of $O$, and the output value of $O$ is the output of $N$.
While the input and output of $N$ are binary vectors, the internal layers of each internal block
can produce real-valued intermediate outputs. A common construction of an internal block $BLK_k$ ($k \in [m]$)
is composed of three main operations:
\begin{enumerate}
	\item linear transformation (LIN):
	      $$
		      \vec{y} = \vec{A}_k\vec{x}_k + \vec{b}_k,$$       
		$\text{where } \vec{A}_k \in \{-1,1\}^{n_{k+1} \times n_k}
		\text{ and }
		      \vec{b}_k \in  \mathbb{R}^{n_{k+1}}$

	\item batch normalization (BN):
	      $$
		      z_i = \alpha_{k_i} (\frac{y_i - \mu_{k_i}}{\nu_{k_i}}) + \gamma_{k_i},
	      $$
	     $ \text{ where }
		      \vec{y} = (y_1, y_2, \ldots, y_{n_{k+1}}), \alpha_{k_i}, \mu_{k_i}, \nu_{k_i}, \gamma_{k_i} \in \mathbb{R},\\
		\text{ and }\nu_{k_i} > 0
	      $
	\item binarization (BIN):
	      $$
		      \vec{x}_{k+1} = \mathit{sgn}(\vec{z}),
	      $$
	      $
		\text{ where } 	\vec{z} = (z_1, z_2, \ldots, z_{n_{k+1}}) \in \mathbb{R}^{n_{k+1}},\\ \text{ and for each } i \in [n_{k+1}], 
		       \mathit{sgn}(z_i) = 1 \text{ if } z_i \geq 0\\
		        \text{ and } \mathit{sgn}(z_i) = -1 \text{ if } z_i < 0, \\
			 \text{ so that } \vec{x}_{k+1} \in \{-1,1\}^{n_{k+1}}
	      $
\end{enumerate}

The output block produces the classification decision. It consists of two layers:
\begin{enumerate}
	\item linear transformation (LIN):
	      $$
		     \vec{w} = \vec{A}_d\vec{x}_d + \vec{b}_d, \text{ where } \vec{A}_d \in \{-1, 1\}^{s \times n_d}
		      \text{ and } \vec{b}_d \in \mathbb{R}^s
	      $$
	\item argmax layer (ARGMAX), which outputs the largest index of the largest entry in $\vec{w}$ as the predicted label
	      $$
		      o = \mathrm{argmax}_{i=1}^s \{w_i\} %, \text{ where } o \in [s]  
	      $$
	      Since we are interested in Boolean classification, we suppose that the number of output values of $N$ is $s = 2$ and that for any $\vec{x} \in \{-1, 1\}^n$,
	      $N$ classifies $\vec{x}$ as a positive instance if and only if $w_2 > w_1$ (the value of $o$ is $2$ in this case, and $1$ otherwise).
\end{enumerate}

\begin{example}
  The concept of common hollyhocks can be represented by the following BNN $N$, with $d = 2$ blocks.
  We consider only one internal block, with four inputs and five outputs. The parameters of $N$ are defined as follows:

LIN:
\[
\vec{A}_1 = \begin{pmatrix*}[r]
    1  & -1 &  1 &  1 \\ 
    1  &  1 & -1 & -1 \\ 
    1  &  1 & -1 &  1 \\ 
    1  &  1 &  1 & -1 \\ 
    1  &  1 &  1 &  1 \\ 
\end{pmatrix*},
\]
\[
\vec{b}_1 = (-3.5,-3.5,-3.5,-3.5, -3.5)
\]

BN:
\[
\mathbf{\alpha_1} = \mathbf{\nu_1} = (1,1,1,1,1), \mathbf{\gamma_1} = \mathbf{\mu_1} = (0,0,0,0,0)
\]

The output block $O$ is defined by:

LIN:
\[
\vec{A}_d = \begin{pmatrix*}[r]
   -1  & -1 & -1 & -1 & -1\\ 
    1  &  1 &  1 &  1 &  1\\ 
\end{pmatrix*},
\vec{b}_d = (-4.5,5)
\]

\end{example}

\section{XAI Queries as Computation Problems}\label{sec:XAIqueries}
%%%%%%%%%%%%%%%%%%%%%%%%%%%%%%%%%%%%%%%%%%%%%%%%%%%%%%%%%%%%%%%%%%%%%%%%%%%%%%%%

%In this section, we consider successively the $9$ XAI queries listed in Table \ref{table:summary} and present them in more details (making them formal).
%The computation problems corresponding to the $9$ XAI queries listed in Table \ref{table:summary} are actually of various types (decision, enumeration, counting, etc.).
%In the following, we present each of them in formal terms.

In this section, we consider successively the $9$ XAI queries from \cite{Audemardetal20}, as listed in Table \ref{table:summary}, and we present them in formal terms. %, as computation problems. 

\paragraph{EMC: Enumerating Minimum-Cardinality explanations}
Given an input $\vec x$ such that $\rho(\vec x) = c$, a \emph{minimum-cardinality explanation} \cite{ShihCD18} of $\vec x$
is an instance $\vec x'$ such that $\rho(\vec x') = c$,
$\vec x'$ coheres with  $\vec x$ on the ones in the sense that for any $k \in \{1, \ldots, n\}$, if $x'_k = 1$ then
$x_k = 1$, and $\vec x'$ has a minimal number of coordinates set to $1$. Roughly speaking, the features that are set to $1$ in $\vec x'$ are enough to explain why
$\vec x$ has been classified by $\rho$ as a positive (or as a negative) instance. Formally:

%\textcolor{green}{We already use $\rho$ in the BNN definition. Is it an issue?}
% Pierre : oui, j'ai changé le $\rho$ des BNN en $\nu$.

\begin{definition}[EMC]
	EMC can be stated as the following problem:
	\begin{itemize}
		\item \underline{Input:} A Boolean representation $\rho$ over $X_n$ and an instance $\vec x \in \mathbb{B}^n$.
		\item \underline{Output}: Enumerate with polynomial delay the set of all minimum-cardinality explanations of $\vec x$ given $\rho$.
	\end{itemize}
\end{definition}

The number of minimum-cardinality explanations of $\vec x$ given $\rho$ can be exponential in the size of the input, thus
the time needed to compute all of them is provably exponential as well. 
EMC[1] denotes the relaxation of EMC where the output consists of a single minimum-cardinality explanation of $\vec x$ given $\rho$. 
Considering any Boolean classifier $\rho$ for Example \ref{running-ex}, $(1, 1, 0, 0)$ is the output of EMC[1] for input $\rho$ and $\vec x = (1, 1, 1, 1)$.

%Actually, it is unlikely that a polynomial-time algorithm exists for generating
%only one minimum-cardinality explanation. To make it formal, we will consider EMC[1] be the restriction of EMC where the output consists of a single
%minimum-cardinality explanation of $\vec x$ given $\rho$. 

%We have shown that EMC[1] is intractable: 
%
%\smallskip
%\begin{proposition}\label{prop:emc}
%	EMC[1] is {\sf NP}-hard.
%\end{proposition}

%%%%%%%%%%%%%%%%%%%%%%%%%%%%%%%%%%%%%%%%%%%%%%%%%%%%%%%%%%%%%%%%%%%%%%%%%%%%%%%%

\paragraph{DPI: Deriving one Prime Implicant explanation}
Given an input $\vec x$ such that $\rho(\vec x) = c$, a \emph{prime implicant explanation} of $\vec x$ \cite{ShihCD18}
(also referred to as a sufficient reason for $\vec x$ given $\rho$ \cite{DarwicheHirth20})  is a subset-minimal partial assignment $\vec x'$ which is coherent with 
$\vec x$ (i.e., $\vec x$  and $\vec x'$ give the same values to the variables that are assigned in $\vec x'$) and which satisfies the property that for every extension $\vec x''$ of $\vec x'$ over $X_n$, we have $\rho(\vec x'') = c$. The features assigned in $\vec x'$ (and the way they are assigned) can be viewed as explaining why $\vec x$ has been classified by $\rho$ as a positive (or as a negative) instance. Formally: 

\begin{definition}[DPI]
	DPI can be stated as the following problem:
	\begin{itemize}
		\item \underline{Input:} A Boolean representation $\rho$ over $X_n$ and  an instance $\vec x \in \mathbb{B}^n$.
		\item \underline{Output}: A prime implicant explanation of $\vec x$ given $\rho$.
	\end{itemize}
\end{definition}

Considering any Boolean classifier $\rho$ for Example \ref{running-ex}, $x_1 \wedge x_2$ can be got as an output of DPI for input $\rho$ and $\vec x = (1, 1, 1, 1)$.
%
%\bigskip
%
%We have shown that DPI is intractable:
%
%\begin{proposition}\label{prop:dpi}
%	DPI is {\sf NP}-hard.
%\end{proposition}

%%%%%%%%%%%%%%%%%%%%%%%%%%%%%%%%%%%%%%%%%%%%%%%%%%%%%%%%%%%%%%%%%%%%%%%%%%%%%%%%

\paragraph{ECO:  Enumerating COunterfactual explanations}
Counterfactual explanations are required when the user is surprised by the result $\vec y$ provided by the classifier $\rho$ on a given instance $\vec x$.
We have $\rho(\vec x) = 1$ (resp. $=0$) while the user was expecting $\rho(\vec x) = 0$ (resp. $=1$).
A \emph{counterfactual explanation} of $\vec x$ given $\rho$ is an instance $\vec x'$ which is as close as possible to
$\vec x$ in terms of Hamming distance and such that $\rho(\vec x') \neq \rho(\vec x)$.
If there is no $\vec x'$ such that $\rho(\vec x') \neq \rho(\vec x)$,
then no counterfactual explanation of $\vec x$ given $\rho$ exists. When $\vec x'$ exists, the set of features that differ in $\vec x$ and $\vec x'$
can be viewed as an explanation as to why $\vec x$ has not been classified as expected by $\rho$. Formally: 

\begin{definition}[ECO]
	ECO can be stated as the following problem:
	\begin{itemize}
		\item \underline{Input:} A Boolean representation $\rho$ over $X_n$ and an instance $\vec x \in \mathbb{B}^n$.
		\item \underline{Output}: Enumerate with polynomial delay the set of all counterfactual explanations of $\vec x$ given $\rho$.
	\end{itemize}
\end{definition}

The number of counterfactual explanations of $\vec x$ given $\rho$ can be exponential in the size of the input, thus
the time needed to compute all of them is provably exponential as well. 
%As it was the case for minimum-cardinality explanations,
%it is unlikely that a polynomial-time algorithm exists for generating only one counterfactual explanation.
ECO[1] denotes the relaxation of ECO where the output consists of a single
counterfactual explanation of $\vec x$ given $\rho$ when such an explanation exists, and $\emptyset$ otherwise.
Considering any Boolean classifier $\rho$ for Example \ref{running-ex}, $(0, 1, 1, 1)$ is the output of ECO[1] 
for input $\rho$ and $\vec x = (1, 1, 1, 1)$.

%We have shown that ECO[1] is intractable:
%
%\smallskip
%\begin{proposition}\label{prop:eco}
%	ECO[1] is {\sf NP}-hard.
%\end{proposition}

%%%%%%%%%%%%%%%%%%%%%%%%%%%%%%%%%%%%%%%%%%%%%%%%%%%%%%%%%%%%%%%%%%%%%%%%%%%%%%%%

\paragraph{CIN:  Counting the INstances associated with a given class}

Counting the number of instances associated with the given class corresponding to $\rho$
is a useful verification query. When the number found heavily differs from the expected one, this
may reflect an issue with the dataset used to learn the parameters of the classifier. Formally: 

\begin{definition}[CIN]
	CIN can be stated as the following problem:
	\begin{itemize}
		\item \underline{Input:} A Boolean representation $\rho$ over $X_n$, and a target class $c \in \mathbb{B}$ (positive or negative instances).
		\item \underline{Output}: The number of instances $\vec x \in \mathbb{B}^n$ classified by $\rho$ as positive instances if $c = 1$, or as negative instances if $c = 0$.
	\end{itemize}
\end{definition}

Considering any Boolean classifier $\rho$ for Example \ref{running-ex}, $11$ is the output of CIN
for input $\rho$ and $c=0$.
%
%We have shown that CIN is intractable:
%
%\smallskip
%\begin{proposition}\label{prop:cin}
%	CIN is {\sf NP}-hard.
%\end{proposition}

%%%%%%%%%%%%%%%%%%%%%%%%%%%%%%%%%%%%%%%%%%%%%%%%%%%%%%%%%%%%%%%%%%%%%%%%%%%%%%%%

\paragraph{EIN:  Enumerating the INstances associated with a given class}

EIN is the enumeration problem that corresponds to CIN:

\begin{definition}[EIN]
	EIN can be stated as the following problem:
	\begin{itemize}
		\item \underline{Input:} A Boolean representation $\rho$ over $X_n$, and a target class $c \in \mathbb{B}$ (positive or negative instances).
		\item \underline{Output}: Enumerate with polynomial delay the set of positive instances $\vec x \in \mathbb{B}^n$ according to $\rho$ if $c = 1$ and the set of negative instances 
		$\vec x \in \mathbb{B}^n$
		      according to $\rho$ if $c = 0$.
	\end{itemize}
\end{definition}

The number of positive (or negative) instances $\vec x \in \mathbb{B}^n$ according to $\rho$ can be exponential in the size of the input, thus
the time needed to compute all of them is provably exponential as well. 
%Actually, it is unlikely that a polynomial-time algorithm exists for generating only one element of this set. To make it formal, let 
EIN[1] denotes the relaxation of EIN where the output consists of a single instance.
% $\vec x \in \mathbb{B}^n$ classified by $\rho$ as a positive instance when $c = 1$ and such an instance exists,
%a single instance $\vec x \in \mathbb{B}^n$ classified by $\rho$ as a negative instance when $c = 0$ and such an instance exists,
%and $\emptyset$ otherwise. 
Considering any Boolean classifier $\rho$ for Example \ref{running-ex}, $(0, 1, 1, 1)$ can be got as an output of EIN[1]
for input $\rho$ and $c=0$.

%We have shown that EIN[1] is intractable:
%
%\smallskip
%\begin{proposition}\label{prop:ein}
%	EIN[1] is {\sf NP}-hard.
%\end{proposition}

%%%%%%%%%%%%%%%%%%%%%%%%%%%%%%%%%%%%%%%%%%%%%%%%%%%%%%%%%%%%%%%%%%%%%%%%%%%%%%%%

\paragraph{IMA:  Identifying MAndatory features / forbidden features in a given class}

When the frequency of a feature $x_k$ (or combination of features) in the class of positive (or negative) instances associated with $\rho$ is equal to $1$,
the feature / combination of features is {\em mandatory} for an instance to be recognized as an element of the class,
while when it is equal to $0$, it is {\em forbidden}. 
Identifying the mandatory and forbidden features for the classes of positive
(or negative) instances (as they are perceived by the classifier)
is useful (the classifier should be such that there is no discrepancy between what is got and what was expected). Formally: 

\begin{definition}[IMA]
	IMA can be stated as the following problem:
	\begin{itemize}
		\item \underline{Input:} A Boolean representation $\rho$ over $X_n$, a term $t$ over $X_n$, and a target class $c \in \mathbb{B}$ (positive or negative instances).
		\item \underline{Output}: $1$ if $t$ is mandatory for the class of positive (resp. negative) instances when $c = 1$ (resp. $c = 0$), and $0$ otherwise.
	\end{itemize}
\end{definition}

A similar definition can be stated for forbidden features. Considering any Boolean classifier $\rho$ for Example \ref{running-ex}, $1$ is the output of IMA
for input $\rho$, $t = x_1$, and $c=1$.
%
%We have shown that IMA is intractable:
%
%\smallskip
%\begin{proposition}\label{prop:ima}
%	IMA is {\sf NP}-hard.
%\end{proposition}

%A similar definition can be stated for forbidden features. % and the same complexity result obtained.
% (testing whether a feature is forbidden is {\sf NP}-hard as well).

%%%%%%%%%%%%%%%%%%%%%%%%%%%%%%%%%%%%%%%%%%%%%%%%%%%%%%%%%%%%%%%%%%%%%%%%%%%%%%%%

\paragraph{IIR:  Identifying IRrelevant features in a given class}

A feature $x_i$ is \emph{irrelevant} for the class of positive (resp. negative) instances associated with $\rho$ if and only if for every positive
(resp. negative) instance $\vec x$ according to $\rho$, the instance $\vec x'$ that coincides with $\vec x$ on every feature but $x_i$ is also classified positively
(resp. negatively) by $\rho$.
Deciding whether a feature is irrelevant or not for the class of positive (resp. negative) instances associated with $\rho$ is a useful verification query for identifying decision bias:
there is such a bias when the membership of any instance $\vec x$ to the class associated with $\rho$ depends on its value for the feature $x_i$ while it should not. Formally:

\begin{definition}[IIR]
	IIR can be stated as the following problem:
	\begin{itemize}
		\item \underline{Input:} A Boolean representation $\rho$ over $X_n$, a feature $x_i \in X_n$, and a target class $c \in \mathbb{B}$ (positive or negative instances).
		\item \underline{Output}: $1$ if $x_i$ is irrelevant for the class of positive (resp. negative) instances associated with $\rho$ when $c = 1$ (resp. $c = 0$), and $0$ otherwise.
	\end{itemize}
\end{definition}

Considering any Boolean classifier $\rho$ for Example \ref{running-ex}, $0$ is the output of IIR
for input $\rho$ and $c=1$, whatever $x_i$ ($i \in [4]$).

%We have shown that IIR is intractable:
%
%\smallskip
%\begin{proposition}\label{prop:iir}
%	IIR is {\sf NP}-hard.
%\end{proposition}

%%%%%%%%%%%%%%%%%%%%%%%%%%%%%%%%%%%%%%%%%%%%%%%%%%%%%%%%%%%%%%%%%%%%%%%%%%%%%%%%

\paragraph{IMO:  Identifying MOnotone (or anti-monotone) features in a given class}

In many applications, it is believed that increasing the value of some feature does not change the
membership to the class of positive (resp. negative) instances associated with the Boolean classifier.
Dually, one might also expect that decreasing the value of some other feature
does not change the membership to the class. It is important to be able to test whether the classifier $\rho$
that has been generated complies or not with such beliefs.

Making it formal calls for a notion of monotonicity (or anti-monoto\-ni\-ci\-ty) of a classifier, which can be stated as follows:
a classifier $\rho$ is \emph{monotone} (resp. \emph{anti-monotone}) with respect to an input feature $x_i$ for the class of positive (resp. negative) instances, if for any positive (resp. negative) instance $\vec x$ according to $\rho$, we have $\rho(\vec x[x_i \leftarrow 1]) = 1$ (resp. $\rho(\vec x[x_i \leftarrow 0]) = 1)$.\footnote{If $\vec x = (x_1, \cdots, x_n)$, then $\vec x[x_k \leftarrow v]$ is the same vector as $\vec x$, except that the $j$th coordinate $x_k$ of $\vec x[x_k \leftarrow v]$ has value $v$.} Formally: 

\begin{definition}[IMO]
	IMO can be stated as the following problem:
	\begin{itemize}
		\item \underline{Input:} A Boolean representation $\rho$ over $X_n$, a feature $x_i \in X_n$, and a target class $c \in \mathbb{B}$ (positive or negative instances).
		\item \underline{Output}: $1$ if $\rho$ is monotone (resp. anti-monotone) w.r.t. $x_i$ for the class of positive (resp. negative) instances, $0$ otherwise.
	\end{itemize}
\end{definition}

Considering any Boolean classifier $\rho$ for Example \ref{running-ex}, $1$ is the output of IMO 
for input $\rho$ and $c=1$, whatever $x_i$ ($i \in [4]$).

%We have shown that IMO is intractable:
%
%\smallskip
%\begin{proposition}\label{prop:imo}
%	IMO is {\sf NP}-hard.
%\end{proposition}

%%%%%%%%%%%%%%%%%%%%%%%%%%%%%%%%%%%%%%%%%%%%%%%%%%%%%%%%%%%%%%%%%%%%%%%%%%%%%%%%

\paragraph{MCP:  Measuring Closeness of a class to a Prototype}

%Finally, one can also be interested in determining how much a given prototype $\vec x$ complies with the class that $\rho$ associates with it
%which can be evaluated by computing the Hamming distance between every element of the class of positive (resp. negative) instances and the prototype
%$\vec x$ when it is classified as a positive (resp. negative) instance by $\rho$, and considering the maximal distance. By convention,
%the maximal distance is set to $\infty$ whenever the class is empty.
%The greater the value the further the distance to the prototype. When a prototype of a class exists, it is supposed to be
%a ``central''  element of the class (i.e., minimizing the maximal distance to any other element of the class). Thus, a large value
%may indicate a problem with the classifier that has been learned. Formally: 

Finally, one can also be interested in determining how much a given prototype $\vec x$ complies with the class that the classifier $\rho$ associates with it. 
This can be evaluated by computing the Hamming distance between $\vec x$ and every element of $\{\vec x' \in \mathbb{B}^n : \rho(\vec x') = \rho(\vec x) \}$
%the class of positive (resp. negative) instances and the prototype
%$\vec x$ when it is classified as a positive (resp. negative) instance by $\rho$, 
and considering the maximal distance. 
%By convention, the maximal distance is set to $\infty$ whenever the class is empty.
%The greater the value the further the distance to the prototype. 
When a prototype of a class exists, it is supposed to be
a ``central''  element of the class (i.e., minimizing the maximal distance to any other element of the class). Thus, a large value
may indicate a problem with the classifier that has been learned. Formally: 

\begin{definition}[MCP]
	MCP can be stated as the following problem:
	\begin{itemize}
		\item \underline{Input:} A Boolean representation $\rho$ over $X_n$ and
		      an instance $\vec x \in \mathbb{B}^n$.
		\item \underline{Output}: The maximal Hamming distance of $\vec x$ to the class of positive (resp. negative) instances 
		      when $\rho(\vec x) = 1$ (resp. $\rho(\vec x) = 0$).
	\end{itemize}
\end{definition}

Considering any Boolean classifier $\rho$ for Example \ref{running-ex}, $2$ is the output of MCP 
for input $\rho$ and $\vec x = (1, 1, 1, 1)$.

%We have shown that MCP is intractable:
%
%\smallskip
%\begin{proposition}\label{prop:mcp}
%	MCP is {\sf NP}-hard.
%\end{proposition}

%%%%%%%%%%%%%%%%%%%%%%%%%%%%%%%%%%%%%%%%%%%%%%%%%%%%%%%%%%%%%%%%%%%%%%%%%%%%%%%%
\section{On the Intelligibility of XAI Queries}\label{sec:complexity}
%%%%%%%%%%%%%%%%%%%%%%%%%%%%%%%%%%%%%%%%%%%%%%%%%%%%%%%%%%%%%%%%%%%%%%%%%%%%%%%%

We are now in position to evaluate the computational intelligibility of each family of classifiers, among decision trees, \dnf\ classifiers, decision lists, random forests, boosted
trees, Boolean multilayer perceptrons, and binarized neural nets over Boolean features. This intelligibility is assessed by determining the set of XAI queries 
(out of the $9$ ones considered in the previous section) that are offered by each family, i.e., those for which the corresponding computation problem is tractable.

Since the computation problems associated with XAI queries are not always decision problems, the intractability of a computation problem is established by 
proving that it is {\sf NP}-hard in the sense of Cook reduction; in this case, the existence of a (deterministic) polynomial-time algorithm to solve the corresponding XAI 
query would imply that {\sf P = NP}, giving thus strong evidence that such an algorithm does not exist.

%For each of the $9$ XAI queries, we point out a computation problem that captures the hardness of the corresponding XAI query. To be more precise, when
%this problem is {\sf NP}-hard, the existence of a (deterministic) polynomial-time algorithm to solve the corresponding XAI query implies that {\sf P = NP},
%giving thus strong evidence that such an algorithm does not exist.

The main results of the paper are synthesized in the two following propositions: % (one per family of classifiers).\pierre{To be gathered somehow because as such it looks quite enumerative.}

\begin{proposition}\label{prop:complexDT}
	For each enumeration problem among EMC, ECO, EIN, there exists an enumeration algorithm with polynomial delay when the Boolean classifier under consideration
	is a decision tree over $X_n$. Furthermore, each problem among DPI, CIN, IMA, IIR, IMO, MCP  is in {\sf P} when the Boolean classifier under consideration is a decision tree over $X_n$.
\end{proposition}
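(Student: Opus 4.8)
The plan is to lean on two structural features of decision trees. First, a decision tree $T$ of size $s$ has at most $s$ leaves; the root-to-leaf path leading to a leaf $\ell$ is a term $\pi_\ell$ (read-once, hence of length at most $n$); the $\pi_\ell$ are pairwise mutually exclusive; and the disjunction of the $\pi_\ell$ over the leaves labelled $c$ is a ``disjoint \dnf'' whose models are exactly the class-$c$ instances. Second, decision trees are closed in linear time under conditioning by a partial assignment (short-circuit each fixed variable). From these two facts a small toolbox of operations is polynomial-time for decision trees: model checking; model counting, equal to $\sum_\ell 2^{\,n-|\pi_\ell|}$ over the $c$-leaves $\ell$; model enumeration; deciding consistency of $T \wedge u$ for a term $u$ (condition $T$ by $u$ and look for a $1$-leaf); and, because deciding consistency of $T_1 \wedge \neg T_2$ for two decision trees amounts to searching among the at most $|T_1|\cdot|T_2|$ pairs (a $1$-leaf term of $T_1$, a $0$-leaf term of $T_2$) for one whose two terms do not clash, also deciding equivalence and entailment between decision trees. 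I would then reduce each of the nine queries to this toolbox.

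Most of the verification queries fall out immediately. CIN is model counting of $T$ (for $c=1$) or $2^n$ minus it (for $c=0$). For IMA, $t$ is mandatory for the positive (resp.\ negative) class iff $T \models t$ (resp.\ $\neg T \models t$), which is checked literal by literal ($T \wedge \overline{\ell_j}$ consistent? some $0$-leaf of $T$ compatible with $\overline{\ell_j}$?), and $t$ is forbidden for the positive (resp.\ negative) class iff $T \wedge t$ (resp.\ $\neg T \wedge t$) is inconsistent; all in polynomial time. For MCP, the maximal Hamming distance from $\vec x$ to its class equals the maximum over the $c$-leaves $\ell$ of $(n-|\pi_\ell|)+f_\ell$, where $f_\ell$ counts the literals of $\pi_\ell$ falsified by $\vec x$: each class-$c$ instance falls in exactly one leaf region, and within a region the distance is maximised by disagreeing with $\vec x$ on every unconstrained coordinate; one scan over the leaves suffices. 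For IIR, $x_i$ is irrelevant for the positive class iff the class-$1$ set is closed under flipping $x_i$ iff $T|_{x_i=0} \equiv T|_{x_i=1}$; this condition is symmetric under complementation, so it also answers the $c=0$ case, and it is decided by the polynomial-time equivalence test. Likewise $\rho$ is monotone w.r.t.\ $x_i$ for the positive class iff $T|_{x_i=0} \models T|_{x_i=1}$, which is also exactly the condition for $\rho$ to be anti-monotone w.r.t.\ $x_i$ for the negative class, so IMO is decided by the polynomial-time entailment test. For DPI, the instance $\vec x$ follows one root-to-leaf path, which ends at a $c$-leaf, and the corresponding path term is a sufficient reason for $\vec x$; starting from it, repeatedly attempt to delete a literal, keeping a deletion whenever the remaining partial assignment $\vec u$ still forces $\rho=c$ (equivalently $\vec u \wedge T$ has no $(1-c)$-model, a conditioning-plus-consistency check), and stop when no single deletion is possible; since ``$\vec u$ forces $\rho=c$'' is monotone in the set of fixed literals, local minimality coincides with subset-minimality, so the output is a prime implicant explanation.

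For EIN one enumerates, leaf by leaf over the $c$-leaves, all completions of $\pi_\ell$ (say in Gray-code order over the variables absent from the path): the delay is $O(n)$ inside a leaf block and $O(|T|)$ between two blocks, and every block is non-empty, so the algorithm has polynomial delay; EIN[1] just follows any path to a $c$-leaf, reporting that the class is empty when there is none. The one step calling for real care is EMC and ECO. Unlike the general situation, where the number of optimal explanations can be exponential in the representation size, for a decision tree there are at most $|T|$ of them, one per leaf, each explicitly computable: a candidate minimum-cardinality explanation $\vec x'$ reaches a unique $c$-leaf $\ell$; coherence with $\vec x$ on the ones forces every positively-tested variable on $\pi_\ell$ to be $1$ in $\vec x$ (otherwise $\ell$ is incompatible), contributes exactly the number $p_\ell$ of positive literals of $\pi_\ell$, and at the optimum every remaining coordinate of $\vec x'$ must be $0$; hence each compatible $c$-leaf yields at most one optimal explanation, of cardinality $p_\ell$, the optimum being the least such $p_\ell$. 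Dually each $(1-c)$-leaf $\ell$ yields the unique class-$(1-c)$ instance closest to $\vec x$ in its region, at distance $f_\ell$ as above; a counterfactual explanation must be one of these instances with $f_\ell$ minimal, and if $T$ has no $(1-c)$-leaf then $\rho$ is constant (equal to $c$) and there is nothing to output. So in both cases a single scan over the leaves produces the optimum together with all its witnesses, and EMC and ECO admit polynomial-delay (in fact polynomial-total-time) enumeration.

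The main obstacle, then, is getting these ``pinning'' arguments for EMC and ECO exactly right: one must verify that the coordinates not tested on a leaf's path are forced to one specific value at the optimum, so that each leaf contributes a unique, explicitly describable solution — this is what prevents the exponential blow-up seen for unrestricted Boolean representations and delivers the polynomial delay. The rest is routine bookkeeping about conditioning decision trees and enumerating their leaves and leaf completions.
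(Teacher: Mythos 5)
Your proof is correct, but it takes a genuinely different route from the paper's. The paper does not argue directly about decision trees for each query: it first observes that \dt\ supports negation in linear time (swap the $0$- and $1$-leaves), then invokes the sufficient conditions of \cite{Audemardetal20}, which reduce each of the nine XAI queries to standard queries and transformations of the knowledge compilation map (\co, \cd, \me, \ct, \im, \eq, \se, \opt, \fo); all of these except \opt\ are known to hold for \dt\ from \cite{Koricheetal13}, so the only new technical content is a structural-induction argument that \dt\ satisfies \opt\ (computing $f_w(T)$ and a tree $\mathit{opt}(T)$ of the $f_w$-minimal models bottom-up), plus the observation that the uses of forgetting in \cite{Audemardetal20} concern class variables and are vacuous here. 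You instead build a small self-contained toolbox (conditioning, disjoint-\dnf\ structure of the leaf paths, consistency of $T\wedge u$, pairwise-leaf entailment/equivalence tests between two trees) and solve each query directly. The paper's approach buys modularity — the same argument applies verbatim to any language satisfying the same KC-map properties — at the price of resting on external results. Your approach buys explicit algorithms and some sharper facts the paper does not state: notably that for a decision tree each $c$-leaf contributes at most one minimum-cardinality explanation and at most one nearest counterfactual, so EMC and ECO have at most $\size{T}$ solutions and are solvable in polynomial \emph{total} time, strictly stronger than the polynomial-delay guarantee claimed. Your pinning arguments for EMC/ECO, the upward-closure justification of greedy literal deletion for DPI, and the reduction of IIR/IMO to equivalence/entailment of the two conditionings $T|_{x_i=0}$ and $T|_{x_i=1}$ all check out.
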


\begin{proof}
	By definition, for each of the $9$ XAI queries, the target class can be the one of positive instances or the one of negative instances.
	This does not raise any issue for decision trees. Indeed, for any decision tree $T$ over $X_n$, one can compute in linear time a decision tree $T'$ representing the complementary class
	to the one associated with $T$, i.e., a decision tree $T'$ such that $\forall \vec x \in \mathbb{B}^n$, $T'(\vec x) = 1$ if and only if $T(\vec x) = 0$.
	To get $T'$ from $T$, it it enough to replace in $T$ every $1$-leaf node by a $0$-leaf node, and  every $0$-leaf node by a $1$-leaf node.\footnote{Stated
		otherwise, \dt\ satisfies the $\nc$ transformation from the knowledge compilation map \cite{Koricheetal13}.}

	Now, \cite{Audemardetal20} have identified sufficient conditions for a (multi-label, yet Boolean) classifier to offer XAI queries
	based on the queries and transformations of the language \L\ used to represent it. Those queries and transformations are standard queries and
	transformations from the knowledge compilation map \cite{DarwicheMarquis02}. 
%	For the $9$ XAI queries of interest,
%	the results are summarized in Table \ref{table:summary2}.
%
%	\begin{table}
%		\begin{center}
%			\begin{small}
%				\begin{tabular}{|c|l|}
%					\hline
%%					          &                                              \\
%					XAI query & Conditions on \L\ making the query tractable \\
%%					          &                                              \\
%					\hline
%%					          &                                              \\
%					EMC       & \cd, \opt, \me                               \\
%					DPI       & \cd, \fo, \im                                \\
%					ECO       & \cd, \opt, \me                               \\
%					CIN       & \cd, \ct                                     \\
%					EIN       & \cd, \me                                     \\
%					IMA       & \cd, \co                                     \\
%					IIR       & \cd, \fo, \eq                                \\
%					IMO       & \cd, \fo, \se                                \\
%					MCP       & \cd, \opt, \me                               \\
%%					          &                                              \\
%					\hline
%				\end{tabular}
%				\caption{XAI queries and conditions on \L\ that are sufficient to make them tractable.}\label{table:summary2}
%			\end{small}
%		\end{center}
%	\end{table}

	It turns out that the language \dt\ of decision trees over Boolean variables satisfies many of those queries and transformations, namely
	\co, \cd, \me, \ct, \im, \opt, \eq, \se. This has been shown in \cite{Koricheetal13} for all of them, but \opt.
	As to \opt, it is easy to adapt the proof that \Dnnf\ satisfies \opt\  \cite{DarwicheMarquis04,Koricheetal16}
	to the case of \dt. Indeed, let $w_{v} \in \mathbb{Q}$ be a number (the weight of $v \in X_n$). For any interpretation
	$\vec x$ over $X_n$, one defines $f_w(\vec x) = \sum_{v \in X_n} w_v \cdot \vec x(v)$. Now, for any formula $\varphi$,
	one defines $f_w(\varphi) = \mathit{min}(\{f_w(\vec x) : \vec x \models \varphi\})$.\footnote{We set $w_v$ to $0$ whenever $v$ does not occur in $\varphi$.}
	It is easy to show by structural induction that when $\alpha = T$ is a decision tree over $X_n$, $f_w(T)$ can be computed in time polynomial
	in the size of $T$ when all the weights $w_{v}$ are bounded by a constant that does not depend on $T$ (which is a reasonable assumption).
	Indeed, we have $f_w(0) = \infty$, $f_w(1) = 0$, and 
	$$f_w(\mathit{ite}(v, T_1, T_2)) = w_v + \mathit{min}(\{f_w(T_1), f_w(T_2)\}).$$
%	\begin{itemize}
%		\item $f_w(0) = \infty$,
%		\item $f_w(1) = 0$,
%		\item $f_w(\mathit{ite}(v, T_1, T_2)) = w_v + \mathit{min}(\{f_w(T_1), f_w(T_2)\})$.
%	\end{itemize}

	On this ground, starting from $T$, one can generate in polynomial time a decision tree
	$\mathit{opt}(T)$ over $\mathit{Var}(T)$ the models of which being precisely the
	models of $T$ over $\mathit{Var}(T)$, that minimize the value of $f_w$. Indeed, we have $\mathit{opt}(0) = 0$, $\mathit{opt}(1) = 1$, and\\
	~\\
	$\mathit{opt}(\mathit{ite}(v, T_1, T_2))$ \\
		      \begin{tabular}[t]{ll}
			      $= \mathit{ite}(v, \mathit{opt}(T_1), \mathit{opt}(T_2))$ & $\mbox{if } f_w(T_1) = f_w(T_2)$ \\
			      $= \mathit{ite}(v, \mathit{opt}(T_1), 0)$                 & $\mbox{if } f_w(T_1) < f_w(T_2)$ \\
			      $= \mathit{ite}(v, 0, \mathit{opt}(T_2))$                 & $\mbox{if } f_w(T_1) > f_w(T_2)$ \\
		      \end{tabular}
	~\\
%	\begin{itemize}
%		\item $\mathit{opt}(0) = 0$,
%		\item $\mathit{opt}(1) = 1$,
%		\item $\mathit{opt}(\mathit{ite}(v, T_1, T_2))$ \\
%		      \begin{tabular}[t]{ll}
%			      $= \mathit{ite}(v, \mathit{opt}(T_1), \mathit{opt}(T_2))$ & $\mbox{if } f_w(T_1) = f_w(T_2)$ \\
%			      $= \mathit{ite}(v, \mathit{opt}(T_1), 0)$                 & $\mbox{if } f_w(T_1) < f_w(T_2)$ \\
%			      $= \mathit{ite}(v, 0, \mathit{opt}(T_2))$                 & $\mbox{if } f_w(T_1) > f_w(T_2)$ \\
%		      \end{tabular}
%	\end{itemize}

	Then using results reported in \cite{Audemardetal20}, we get that \dt\ offers the XAI queries EMC, ECO, CIN, EIN, IMA, MCP.
	Finally, though \dt\ does not satisfy \fo\ \cite{Koricheetal13}, the XAI queries that have been addressed using forgetting in \cite{Audemardetal20} require
	to apply the forgetting transformation to eliminate from $T$ variables representing classes. This is useless here since no class variable is used
	in $T$ (only two classes are implicitly considered here, the one associated with $T$, alias the class of positive instances, and its complementary set
	which can be obtained by computing $T'$). 
	Thus, \dt\ also offers the XAI queries DPI,\footnote{A more direct proof can be found in \cite{DBLP:journals/corr/abs-2010-11034}.} IIR, and IMO.

	%The only XAI query that remains to be considered is MCH. One does not know any polynomial-time for achieving the $\decac$ transformation over \dt\
	%(and it seems unlikely that such an algorithm exists). However, one can prove in a more direct way that \dt\ offers MCH. Indeed, 
	%\dt\ satisfies \cd\ and any \dt\ representation $T$ can be turned in linear time into an equivalent \dnf\ formula
	%(the disjunction of all terms corresponding to the paths of $T$ leading to a $1$-leaf). In order to measure the closeness w.r.t. Hamming similarity 
	%between two classes $y_j, y_k \in Y$ given a decision tree $T$, it is enough to compute the Hamming distance between the decision trees $T \mid y_j$ 
	%and $T \mid y_k$, or equivalently, between equivalent \dnf\ formulae. It remains to observe that the Hamming distance 
	%$d_H(\bigvee_{j=1}^p \gamma_j, \bigvee_{k=1}^q \gamma'_k)$ between two \dnf\ formulae 
	%$\bigvee_{j=1}^p \gamma_j$ and $\bigvee_{k=1}^q \gamma'_k$ can be computed
	%in polynomial time in the size of the input, given that 
	%$$d_H(\bigvee_{j=1}^p \gamma_j, \bigvee_{k=1}^q \gamma'_k) = \mathit{min}(\{d_H(\gamma_j, \gamma'_k) : j \in \{1, \ldots, p\}, k \in \{1, \ldots, q\}\})$$
	%and that the Hamming distance between two terms $\gamma_j, \gamma'_k$ is precisely the number of variables that are given distinct truth values
	%in the two terms.
\end{proof}

%\begin{proposition}\label{prop:complexCNF}
%Each problem among EMC[1], DPI, ECO[1], CIN, EIN[1], IMA, IIR, IMO, MCP  is {\sf NP}-hard when the Boolean classifier under consideration is a \cnf\ classifier over $X$.
%\end{proposition}

\begin{proposition}\label{prop:complexothers}
	Each problem among EMC[1], DPI, ECO[1], CIN, EIN[1], IMA, IIR, IMO, MCP  is {\sf NP}-hard when the Boolean classifier under consideration is a \dnf\ formula, a decision list, a random forest, a boosted tree, a Boolean multilayer perceptron, or a binarized neural network over $X_n$.
\end{proposition}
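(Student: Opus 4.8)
The plan is to prove all nine hardness results first for the single class of \dnf\ formulae, and then to transfer them to the five remaining classes by exhibiting, for each of them, a polynomial-time \emph{equivalence-preserving} translation from \dnf. This works because every one of the nine queries is \emph{semantic}: the answer to EMC[1], DPI, ECO[1], CIN, EIN[1], IMA, IIR, IMO, MCP depends only on the Boolean function $\concept{\rho}$ and on the auxiliary arguments ($\vec x$, the term $t$, the feature $x_i$, the target class $c$), never on the syntax of $\rho$. Hence a poly-time map sending a \dnf\ $D$ to a representation $\rho_D$ of a class $\mathcal C$ with $\concept{\rho_D}=\concept{D}$, over the same variables $X_n$, turns any {\sf NP}-hardness proof for \dnf\ into one for $\mathcal C$ verbatim.

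For \dnf, I would organise the reductions according to the underlying hard problem. For EIN[1] and CIN it is immediate: given a \cnf\ $\psi$, the \dnf\ $\neg\psi$ has a negative instance iff $\psi$ is satisfiable, so EIN[1] on $(\neg\psi, c{=}0)$ solves \cnf\ satisfiability, while CIN on $(\neg\psi, c{=}0)$ returns the number of models of $\psi$, i.e.\ solves $\#$SAT; both are {\sf NP}-hard in the Cook sense. For EMC[1], ECO[1] and MCP I would reduce from minimum Vertex Cover / maximum Independent Set through the $2$-term \dnf\ $D_G=\bigvee_{(u,v)\in E}(\overline{x}_u\wedge\overline{x}_v)$ associated with a graph $G=(V,E)$: the negative instances of $D_G$ are exactly the vertex covers of $G$, so a minimum-cardinality explanation of the all-ones instance, a closest counterfactual of the all-zeros instance (note $D_G(\vec{0})=1$), and --- for the dual \dnf\ $\bigvee_{(u,v)\in E}(x_u\wedge x_v)$ --- the maximal Hamming distance from $\vec{0}$ to the negative class, respectively encode a minimum vertex cover, a minimum vertex cover, and a maximum independent set. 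For DPI, IIR and IMO I would reduce from \dnf-tautology ({\sf coNP}-complete, hence {\sf NP}-hard in the Cook sense), observing that a \dnf\ $\delta$ is valid iff the only prime-implicant explanation of any model of $\delta$ is the empty partial assignment, iff a fresh variable $x$ is irrelevant for the positive class of $x\vee\delta$, iff $\overline{x}\vee(x\wedge\delta)$ is monotone in $x$ for its positive class. Finally, for IMA the source is \cnf-unsatisfiability: for a \cnf\ $\psi$ and a fresh variable $z$, the term $\overline{z}$ is mandatory for the negative class of the \dnf\ $\neg\psi\vee\overline{z}$ iff $\psi$ is unsatisfiable.

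Transferring to decision lists, random forests, boosted trees and Boolean multilayer perceptrons is then routine. A \dnf\ $t_1\vee\cdots\vee t_m$ is equivalent to the decision list $\langle t_1,1\rangle,\dots,\langle t_m,1\rangle,\langle\top,0\rangle$; to the random forest consisting of one small ``path'' decision tree per term $t_k$ together with $m-1$ copies of the constant-$1$ tree (among the resulting $2m-1$ trees a strict majority fire iff at least one term is satisfied); to the boosted tree with those same term-trees weighted $\tfrac{1}{2m}$ each and one constant-$1$ tree weighted $\tfrac12$; and to the depth-$3$ Boolean multilayer perceptron with one hidden threshold neuron per term (weight $+1$ on the positive literals of $t_k$, $-1$ on the negative ones, bias $-|P_k|$) feeding a single output neuron that computes their disjunction. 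Each translation is equivalence-preserving, of polynomial size, and over $X_n$, so every one of the nine hardness results carries over.

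The delicate case --- and the step I expect to be the main obstacle --- is binarized neural networks, whose linear layers are restricted to $\{-1,+1\}$ entries, so that a single neuron computes an ``agreement-count'' threshold over \emph{all} of its inputs and cannot isolate a sparse term or clause. The plan is still to produce a poly-time equivalence-preserving \dnf-to-BNN map, but via a tailored multi-block gadget rather than a term-by-term imitation: the output (argmax) block already supplies effective $\{-1,0,+1\}$ coefficients over the last internal layer for free, and inside internal blocks a missing $0$ weight on a neuron is emulated by also summing a negated copy of that neuron, an effective $\pm1$ weight being recovered as their difference; the $\{0,1\}$-vs-$\{-1,+1\}$ input convention is dealt with by the standard affine identification. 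Arranging these negated copies --- in particular from the inputs, which a single $\{-1,+1\}$-weight block cannot reproduce cleanly --- is exactly where the $\pm1$ restriction bites and needs real care; an alternative, arguably cleaner, route is to adapt the known intractability proofs for BNN verification to the present queries. Once such a translation is available, the nine hardness results transfer just as for the other classes, and the proposition follows.
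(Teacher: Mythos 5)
Your overall strategy---establish hardness for one base representation and transfer it to the others via polynomial-time equivalence-preserving translations, which is legitimate because all nine queries depend only on $\concept{\rho}$ and the auxiliary inputs---is exactly the paper's strategy, just run in the opposite direction: the paper proves hardness for \cnf\ classifiers via a uniform family of SAT-based reductions and then obtains the \dnf\ case by negation plus a swap of the target class, whereas you start from \dnf\ with a more varied menu of source problems (Vertex Cover, Independent Set, \dnf-tautology). Your \dnf\ reductions and your translations to decision lists, random forests, boosted trees and Boolean multilayer perceptrons all check out (one small wrinkle: in your IMA reduction the negative class is \emph{empty} when $\psi$ is unsatisfiable, so ``frequency equal to $1$'' is vacuous/undefined there; the paper sidesteps this by arranging for the class to be a singleton rather than empty in the unsatisfiable case, and you should patch your gadget similarly).

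The genuine gap is the binarized neural network case, which you explicitly leave unresolved (``once such a translation is available\dots''). Since that is one of the six classes in the statement, the proposition is not proved. The difficulty you correctly identify---a $\{-1,+1\}$-weight neuron cannot assign weight $0$ to a variable absent from a term or clause---is resolved in the paper not by an internal gadget but by \emph{changing the reduction itself}: the instance fed to the BNN is $\mathit{transl}(\vec x)\in\{-1,1\}^{2n}$, in which each Boolean coordinate $x_j$ is mapped to \emph{two} copies of $2x_j-1$; a clause row of $\vec A_1$ then assigns the pair of copies the weights $(-1,+1)$ when $x_j$ is absent (so their contributions cancel) and $(\pm 1,\pm 1)$ when a literal on $x_j$ occurs, with the bias $-2|\delta_i|+1$ making $y_i\geq 0$ exactly when the clause is falsified; the output block's two rows $(1,\dots,1)$ and $(-1,\dots,-1)$ with biases $2k$ and $1$ then make the argmax return $2$ iff all clauses are satisfied. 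The BNN so built is equivalent to the source formula only \emph{modulo} the linear-time map $\mathit{transl}$, i.e., $\rho(\vec x)=c$ iff $N(\mathit{transl}(\vec x))=c+1$, but that suffices for a Cook reduction. You need to supply this (or an equivalent) construction explicitly; as written, your argument for BNNs is a plan, not a proof.
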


%\begin{proposition}\label{prop:complexDNF}
%Each problem among EMC[1], DPI, ECO[1], CIN, EIN[1], IMA, IIR, IMO, MCP  is {\sf NP}-hard when the Boolean classifier under consideration is a \cnf\ classifier over $X$.
%\end{proposition}
%
%\begin{proposition}\label{prop:complexDL}
%Each problem among EMC[1], DPI, ECO[1], CIN, EIN[1], IMA, IIR, IMO, MCP  is {\sf NP}-hard when the Boolean classifier under consideration is a decision list over $X$.
%\end{proposition}
%
%\begin{proposition}\label{prop:complexRF}
%Each problem among EMC[1], DPI, ECO[1], CIN, EIN[1], IMA, IIR, IMO, MCP  is {\sf NP}-hard when the Boolean classifier under consideration is a random forest over $X$.
%\end{proposition}
%
%\begin{proposition}\label{prop:complexGBT}
%Each problem among EMC[1], DPI, ECO[1], CIN, EIN[1], IMA, IIR, IMO, MCP  is {\sf NP}-hard when the Boolean classifier under consideration is a boosted tree over $X$.
%\end{proposition}
%
%\begin{proposition}\label{prop:complexPER}
%Each problem among EMC[1], DPI, ECO[1], CIN, EIN[1], IMA, IIR, IMO, MCP  is {\sf NP}-hard when the Boolean classifier under consideration is a multilayer perceptron over $X$.
%\end{proposition}
%
%\begin{proposition}\label{prop:complexBNN}
%Each problem among EMC[1], DPI, ECO[1], CIN, EIN[1], IMA, IIR, IMO, MCP  is {\sf NP}-hard when the Boolean classifier under consideration is a binary neural network over $X$.
%\end{proposition}

\begin{proof}
The proof is organized into three parts. In a first part, we show that the well-known SAT problem for \cnf\ formulae can be reduced in polynomial time to every problem among EMC[1], DPI, ECO[1], CIN, EIN[1], IMA, IIR, IMO, MCP where the Boolean classifier under consideration $\rho$ is given as a \cnf\ formula. 

In a second part, we show how a \cnf\ classifier $\rho$ can be associated in polynomial time with an equivalent classifier having the form of a decision list, a random forest, 
a boosted tree, a Boolean multilayer perceptron, or a binarized neural net over Boolean features. 

Combing the polynomial reductions from the first part with the polynomial translations of the second part, the {\sf NP}-hardness results stated in the proposition and concerning
decision lists, random forests, boosted trees, Boolean multilayer perceptrons, and binarized neural nets follow.
Finally, the case of \dnf\ classifiers is addressed in a third part.

Let us start with the first part of the proof:
\begin{itemize}
\item {\bf EMC[1].}~ 
	%By reduction from $k$-CNF-SAT. 
	%Let $\alpha$ be a $k$-\cnf\ formula over $\{x_1, \ldots, x_{n-1}\}$ with $k \geq 3$. 
	Let $\alpha = \bigwedge_{i=1}^k \delta_i$ be a \cnf\ formula over $\{x_1, \ldots, x_{n-1}\}$.
	We associate with $\alpha$ in polynomial time the ordered pair $(\rho, \vec x)$ where 
	$\rho =  \bigwedge_{i=1}^k \bigwedge_{j=1}^n (\delta_i \vee x_j)$ is a \cnf\ formula over $X_n = \{x_1, \ldots, x_n\}$ (equivalent to $\alpha \vee (\bigwedge_{i=1}^n x_i)$), and 
	$\vec x = \bigwedge_{i=1}^n x_i$. 
	Clearly enough, $\rho$ classifies $\vec x$ as a positive instance. Now, there are two cases:
	\begin{itemize}
		\item If $\alpha$ is unsatisfiable, then $\rho$ is equivalent to $\bigwedge_{i=1}^n x_i$. In this case, the sole minimum-cardinality explanation of $\vec x$ given $\rho$ is equal to 
		$\vec x$.
		\item If $\alpha$ is satisfiable, then it has a model $\vec x'$ over $\{x_1, \ldots, x_{n-1}\}$. The instance $\vec x'' \in \mathbb{B}^n$ that extends $\vec x'$ and sets $x_n$ to $0$
		      is classified as a positive instance by $\rho$, and it contains less features set to $1$ than $\vec x$, thus $\vec x$ is not a minimum-cardinality explanation of $\vec x$ in this case.
	\end{itemize}
%	Thus, if one was able to compute a single minimum-cardinality explanation of $\vec x$, testing whether it is equal to $\vec x$ would
%	permit to decide whether $\alpha$ is satisfiable or not. 
%	In order to conclude, it is enough to observe that whilst $\rho$ is not a \cnf\ formula, it can be turned in polynomial time into the equivalent \cnf\ formula
%	$\bigwedge_{i=1}^k \bigwedge_{j=1}^n (\delta_i \vee x_j)$. 
%	%by distributing the $\vee$ connective over the $\wedge$ ones
%	This shows that EMC[1] is {\sf NP}-hard for \cnf\ classifiers. 

\item {\bf DPI.}~ 
	%By reduction from $k$-CNF-SAT. 
	%Let $\alpha$ be a $k$-\cnf\ formula over $\{x_1, \ldots, x_{n-1}\}$ with $k \geq 3$.
	Let $\alpha$ be a \cnf\ formula over $\{x_1, \ldots, x_{n-1}\}$. 
	We associate with $\alpha$ in polynomial time the ordered pair $(\rho, \vec x)$
	where $\rho = \alpha \wedge (\bigvee_{i=1}^n \overline{x_i})$ is a \cnf\ formula over $X_n = \{x_1, \ldots, x_n\}$ and $\vec x = \bigwedge_{i=1}^n x_i$.
	By construction, $\vec x$ is classified by $\rho$ as a negative instance. Now:
	\begin{itemize}
		\item If $\alpha$ is unsatisfiable, then every instance $\vec x' \in \mathbb{B}^n$ is classified by $\rho$ as a negative instance since $\rho$ is equivalent to $\bot$.
		      This is equivalent to state that there is a unique prime implicant explanation of $\vec x$ classified by $\rho$ as a negative instance, namely $\top$.
		\item If $\alpha$ is satisfiable, then it has a model over $\{x_1, \ldots, x_{n-1}\}$, and the instance $\vec x' \in \mathbb{B}^n$ that extends this model and sets $x_n$ to $0$ is a model
		      of $\alpha \wedge (\bigvee_{i=1}^n \overline{x_i})$. Thus, $\vec x'$ is classified by $\rho$ as a positive instance, and as a consequence
		      $\top$ is not a prime implicant explanation of $\vec x$ given $\rho$.
	\end{itemize}
%	To conclude the proof, suppose that a polynomial-time algorithm
%	for computing a prime implicant explanation of $\vec x$ given $\rho$ would exist. Then one could take advantage of it to derive such a prime implicant explanation in polynomial time, and then to simply test whether this explanation is $\top$ or not: it is $\top$ if and only if $\alpha$ is unsatisfiable. This would give a polynomial-time algorithm for deciding whether a \cnf\ formula is
%	satisfiable. Hence, DPI is {\sf NP}-hard for \cnf\ classifiers. 

\item {\bf ECO[1].}~ 
	%By reduction from $k$-CNF-SAT. 
	%Let $\alpha$ be a $k$-\cnf\ formula over $\{x_1, \ldots, x_{n-1}\}$ with $k \geq 3$.
	Let $\alpha$ be a \cnf\ formula over $\{x_1, \ldots, x_{n-1}\}$. Consider the same polynomial reduction as in the DPI case. There are two cases: 
%	This time, $\vec x$ is classified by $\rho$ as a negative instance:
%	We associate with $\alpha$ in polynomial time the ordered pair $(\rho, \vec x)$
%	where $\rho = \alpha \wedge (\bigvee_{i=1}^n \overline{x_i})$ is a formula over $X_n = \{x_1, \ldots, x_n\}$ and $\vec x = \bigwedge_{i=1}^n x_i$.
%	By construction, $\vec x$ is classified by $\rho$ as a negative instance.
	\begin{itemize}
		\item If $\alpha$ is unsatisfiable, then every instance $\vec x' \in \mathbb{B}^n$ is classified by $\rho$ as a negative instance since $\rho$ is equivalent to $\bot$.
		      Thus, in this case, there is no counterfactual explanation of $\vec x$ given $\rho$.
		\item If $\alpha$ is satisfiable, then it has a model over $\{x_1, \ldots, x_{n-1}\}$.
		      The instance $\vec x' \in \mathbb{B}^n$ that extends this model and sets $x_n$ to $0$ is a model of $\alpha \wedge (\bigvee_{i=1}^n \overline{x_i})$. In this case, the
		      set of positive instances given $\rho$ is not empty, and as a consequence, a counterfactual explanation of $\vec x$ given $\rho$ exists.
	\end{itemize}
%	To conclude the proof, suppose that a polynomial-time algorithm for computing a counterfactual explanation of $\vec x$ given $\rho$ would exist.
%	By definition, it would return $\emptyset$ if and only if $\alpha$ is unsatisfiable. This would give a polynomial-time algorithm for deciding whether a \cnf\ formula is
%	satisfiable. Hence, ECO[1] is {\sf NP}-hard for \cnf\ classifiers. 

\item {\bf CIN.}~ 
	%By reduction from $k$-CNF-SAT. 
	%Let $\alpha$ be a $k$-\cnf\ formula over $\{x_1, \ldots, x_{n}\}$, with $k \geq 3$. 
	Let $\alpha$ be a \cnf\ formula. 	We associate with $\alpha$ in polynomial time the ordered pair $(\rho, c)$
	where $\rho = \alpha$, and $c = 1$.
	The number of instances $\vec x \in \mathbb{B}^n$ classified positively by $\rho$ is equal to the number of models of $\alpha$. If it was
	possible to compute this number in polynomial time, then one could decide in polynomial time whether $\alpha$ is satisfiable or not.

\item {\bf EIN[1].}~ 
	We consider the same polynomial reduction 
	%from $k$-CNF-SAT 
	as 
	%used 
	in the CIN case. %proof of Proposition \ref{prop:cin}.
	An instance $\vec x \in \mathbb{B}^n$ classified by $\rho$ as a positive instance exists if and only if $\alpha$ is satisfiable.

\item {\bf IMA.}~ 
	%By reduction from $k$-CNF-SAT. 
%	Let $\alpha$ be  a $k$-\cnf\ formula over $\{x_1, \ldots, x_{n-1}\}$ with $k \geq 3$.
%	We associate with $\alpha$ in polynomial time the ordered pair $(\rho, t)$
%	where  $\rho = \alpha \vee (\bigwedge_{i=1}^n x_i)$ is a formula over $X_n = \{x_1, \ldots, x_n\}$ and $t = x_n$.
	Let $\alpha$ be a \cnf\ formula over $\{x_1, \ldots, x_{n-1}\}$. We associate with $\alpha$ in polynomial time the triple $(\rho, t, c)$ where
	$\rho$ is the same formula as in the proof for the EMC[1] case, $t = x_n$, and $c = 1$:
	\begin{itemize}
		\item If $\alpha$ is unsatisfiable, then $\vec x = \bigwedge_{i=1}^n x_i$ is the unique instance of $\mathbb{B}^n$ that is classified positively by $\rho$.
		      Thus, every feature from $\vec x$ (especially, those of $t = x_n$) is mandatory for the class of positive instances.
		\item If $\alpha$ is satisfiable, then it has a model over $\{x_1, \ldots, x_{n-1}\}$ and the instance $\vec x'$ that extends this model
		      and sets $x_n$ to $0$ is classified positively by $\rho$, showing
		      that $t = x_n$ is not mandatory for the class of positive instances.
	\end{itemize}
%	To sum up, $t$ is mandatory for the class of positive instances if and only if $\alpha$ is
%	unsatisfiable, and this concludes the proof.\footnote{The reduction used here is a Karp reduction, showing that the decision problem IMA is {\sf coNP}-hard. It can also be viewed as
%		a Cook reduction showing that IMA is {\sf NP}-hard -- of course, the concept of {\sf NP}-hardness under consideration differs in both cases but this is harmless for our purpose.
%		The conclusion that, unless {\sf P = NP}, there is no (deterministic) polynomial-time algorithm for solving IMA given a \cnf\ classifier, holds in the two cases.}

	The case of forbidden features is similar (consider $\rho = \alpha \vee (\bigwedge_{i=1}^n \overline{x_i})$ instead of
	$\rho = \alpha \vee (\bigwedge_{i=1}^n x_i)$: $t$ is forbidden for the class of positive instances if and only if $\alpha$ is unsatisfiable).

\item {\bf IIR.}~ 
	%By reduction from the problem of deciding whether a \cnf\ formula depends or not on a variable $x_i$. 
	%Let $\alpha$ be  \cnf\ formula over $\{x_1, \ldots, x_{n}\}$. We take $\rho = \alpha$.
	%$x_i$ is irrelevant given $\rho$ if and only if  $\alpha$ is Var-independent from $x_i$, which is {\sf coNP}-hard 
	%when $\alpha$ is a \cnf\ formula (see Proposition 10 from \cite{LLM03}).
	%Let $\alpha$ be a $k$-\cnf\ formula over $\{x_1, \ldots, x_{n-1}\}$. 
	Let $\alpha$ be a \cnf\ formula over $\{x_1, \ldots, x_{n-1}\}$. 
	Let us associate with $\alpha$ in polynomial time the triple $(\rho, x_n, c)$ where
	$\rho = \alpha \wedge x_n$ is a \cnf\ formula over $X_n = \{x_1, \ldots, x_n\}$, and $c = 0$:
	\begin{itemize}
		\item If $\alpha$ is unsatisfiable, then $\rho$ is unsatisfiable as well, and $x_n$ is an irrelevant feature for the class of negative instances associated with $\rho$.
		\item If $\alpha$ is satisfiable, then it has a model over $\{x_1, \ldots, x_{n-1}\}$. Consider the instance
		      $\vec x$ that extends this model and sets $x_n$ to $1$. Then $\rho(\vec x) = 1$. However the instance $\vec x' = \vec x[x_n \leftarrow 0]$
		      that coincides with $\vec x$ on every feature but $x_n$ is such that $\rho(\vec x') = 0$. This shows that $x_n$ is relevant for the class of negative instances
		      associated with $\rho$.
	\end{itemize}
%	Accordingly, if it was possible to determine in polynomial time whether a feature is relevant or not for the class of negative instances associated with a \cnf\ classifier $\rho$,
%	then one could also decide in polynomial time whether a 
%	%$k$-\cnf\ 
%	\cnf\ formula $\alpha$ is satisfiable or not, which is {\sf NP}-hard.

\item {\bf IMO.}~
	%Let $\alpha$ be a $k$-\cnf\ formula over $\{x_1, \ldots, x_{n-1}\}$. 
	Let $\alpha$ be a \cnf\ formula over $\{x_1, \ldots, x_{n-1}\}$. 
	Let us associate with $\alpha$ in polynomial time the triple $(\rho, x_n, c)$ where $\rho = \alpha \wedge \overline{x_n}$ is a \cnf\ formula 
	over $X_n = \{x_1, \ldots, x_n\}$, and $c = 1$:
	\begin{itemize}
		\item If $\alpha$ is unsatisfiable, then $\rho$ is unsatisfiable as well and, as such, $\rho$ is obviously monotone w.r.t. $x_n$ (it is monotone w.r.t. every feature).
		\item If $\alpha$ is satisfiable, then it has a model over $\{x_1, \ldots, x_{n-1}\}$. Consider the instance
		      $\vec x$ that extends this model and sets $x_n$ to $0$. Then $\rho(\vec x) = 1$. However the
		      instance $\vec x' = \vec x[x_n \leftarrow 1]$ that coincides with $\vec x$ on every feature but $x_n$ is such that $\rho(\vec x') = 0$.
		      This shows that $\rho$ is not monotone w.r.t. $x_n$ for the class of positive instances.
	\end{itemize}
%	Accordingly, if it was possible to determine in polynomial time whether a \cnf\ classifier $\rho$ is monotone w.r.t. a given feature for the class of positive instances, then one could also
%	decide in polynomial time whether a 
%	%$k$-\cnf\ 
%	\cnf\ formula $\alpha$ is satisfiable or not, which is {\sf NP}-hard. 
	The case of anti-monotone features is similar
	(consider $\rho = \alpha \wedge x_n$ and extends the counter-model of $\alpha$ by setting $x_n$ to $1$ when $\alpha$ is satisfiable).

\item {\bf MCP.}~
	%By reduction from $k$-CNF-SAT. 
%	Let $\alpha$ be a $k$-\cnf\ formula over $\{x_1, \ldots, x_{n-1}\}$ with $k \geq 3$.
%	We associate with $\alpha$ in polynomial time the ordered pair $(\rho, \vec x)$
%	where  $\rho = \alpha \vee \bigwedge_{i=1}^n x_i$ is a formula over $X_n = \{x_1, \ldots, x_n\}$ and $\vec x = \bigwedge_{i=1}^n x_i$.
  Let $\alpha$ be a \cnf\ formula over $\{x_1, \ldots, x_{n-1}\}$. Consider the same polynomial reduction as in the EMC[1] case:
	\begin{itemize}
		\item If $\alpha$ is unsatisfiable, then the unique instance classified positively by $\rho$ is $\vec x$,
		      showing that the maximal Hamming distance of $\vec x$ to an element of the class associated with $\rho$ is $0$.
		\item If $\alpha$ is satisfiable, then it has a model over $\{x_1, \ldots, x_{n-1}\}$ and the instance $\vec x'$ that extends this model and
		      is such that $x_n$ is set to $0$ is classified positively by $\rho$.
		      In this case, the maximal Hamming distance of $\vec x$ to the class associated with $\rho$ is $\geq 1$.
	\end{itemize}
\end{itemize}

Let us now present the second part of the proof:

%		      A close inspection at the reductions pointed out in the proofs of the propositions reported in Section \ref{sec:hard}
%		      shows that the Boolean formula $\rho$ over $X_n$ that is considered can be turned in polynomial time into an equivalent
%		      \cnf\ formula $\varphi$ over $X_n$, thus a \cnf\ classifier over $X_n$. Indeed, it is given by (1) $\rho = \alpha \vee (\bigwedge_{i=1}^n x_i)$ for EMC, IMA, MCP, by (2)
%		      $\rho = \alpha \wedge (\bigvee_{i=1}^n \overline{x_i})$ for DPI, ECO, by (3) $\rho = \alpha$ for CIN, EIN, by (4) $\rho = \alpha \wedge x_n$ for IIR,
%		      and finally (5) by $\rho = \alpha \wedge \overline{x_n}$ for IMO.
%		      Since $\alpha$ is a \cnf\ formula, in case (2), (3), (4), (5), $\rho$ already is a \cnf\ formula, hence we can take $\varphi = \rho$.
%		      In case (1), $\rho$ is not a \cnf\ formula but it can turned in polynomial time into an equivalent \cnf\ formula $\varphi$ by distributing
%		      the $\vee$ connective over the $\wedge$ ones. It remains now to show that such a \cnf\ formula $\varphi$ (or its negation) can be translated
%		      in polynomial time into an equivalent representation from the family of classifiers under consideration.
	\begin{itemize}
		\item {\bf Decision lists.}
		      Every \cnf\ formula $\rho$ can be turned in linear time into an equivalent decision list $L$ 
		      (see Theorem 1 from \cite{Rivest87}). Accordingly, every reduction pointed out in the first part of the proof 
		      %proofs of the propositions reported in Section \ref{sec:hard}
		      can be turned into a reduction such that the targeted representation is a decision list, and this concludes the proof.
		      %\end{proof}

		      %\noindent {\bf Proof of Proposition \ref{prop:complexRF}}
		      %\begin{proof}
		\item {\bf Random forests.}
		      We exploit the same idea as in the proof for the decision lists case. %of Proposition \ref{prop:complexDL}. 
		      To get the result, it is enough to show that every \cnf\ formula $\rho$ can be
		      turned in linear time into an equivalent random forest $F$. The translation is as follows: given a \cnf\ formula $\rho = \bigwedge_{i=1}^k \delta_i$ over $X_n$, we associate
		      with it in linear time the random forest $$F = \{T_1, \ldots, T_k, \underbrace{0, \ldots, 0}_{k-1}\}$$ over $X_n$, where each $T_i$ ($i \in [k]$) is a decision tree over $X_n$ that
		      represents the clause $\delta_i$.

		      Each $T_i$ ($i \in [k]$) is a comb-shaped tree that can easily be generated in time linear in the size of $\delta_i$:
		      if $\delta_i$ is the empty clause, then return $T_i = 0$, else considering the literals $\ell$ of $\delta_i$ in sequence,
		      generate a decision node of the form $\mathit{ite}(x, 1, T_i^{\ell})$ (resp. $\mathit{ite}(x, T_i^{\ell}, 1)$ if $\ell$ is a negative literal $\overline{x}$ (resp. a positive literal $x$),
		      where $T_i^{\ell}$ is a decision tree for the clause $\delta_i \setminus \{\ell\}$.

		      Finally, by construction, the only subset of trees of $F$ that contains more that half of the trees and that can be consistent is $\{T_1, \ldots, T_k\}$
		      (every other subset of $F$ containing at least $k$ trees contains a tree reduced to $0$, and as such, is inconsistent). Accordingly, $F$ is
		      equivalent to the conjunction of all trees from $\{T_1, \ldots, T_k\}$. Since each $T_i$ ($i \in [k]$) is equivalent to the clause
		      $\delta_i$ of $\rho$, we get that $F$ is equivalent to $\rho$, as expected.
		      %\end{proof}

		      %\noindent {\bf Proof of Proposition \ref{prop:complexGBT}}
		      %\begin{proof}
		\item {\bf  Boosted trees.}
		      Direct from %Proposition \ref{prop:complexRF} \textcolor{green}{the label does not exist}
		      the proof for the random forests case, given that a random forest $F = \{T_1, \ldots, T_m\}$ can be turned in linear time into
		      an equivalent boosted tree $B = \{\langle T_1, \frac{1}{m}\rangle, \ldots, \langle T_m, \frac{1}{m}\rangle\}$ where the weight of each tree is equal to $\frac{1}{m}$.
		      %\end{proof}

		      %\noindent {\bf Proof of Proposition \ref{prop:complexPER}}
		      %\begin{proof}
		\item {\bf Boolean multilayer perceptrons.}
		      It is not difficult to turn in polynomial time any \cnf\ formula $\rho = \bigwedge_{i=1}^k \delta_i$
		      over $X_n = \{x_1, \ldots, x_n\}$, such that $\rho$ does not contain any valid clause (this can be ensured efficiently), into a Boolean multilayer perceptron $P$ over $X_n$, that
		      is logically equivalent to $\rho$. One uses only three layers: as expected, the first one $V_1$ contains $n$ vertices (on per variable $x_i \in X_n$),
		      the last one $V_3$ contains a single vertex $v_{3,1}$, and the second layer $V_2$ contains $k$ vertices, one per clause in $\rho$. 
		      The output of $v_{3,1}$ is the output of $P$.
		      Let $\delta_i$ be any clause of $\rho$ and let $v_{2,i}$ be the corresponding vertex. 
		      For every vertex $v_{1,j}$ ($j \in [n]$) from the first layer $V_1$ that is associated with a variable $x_j \in X_n$, 
		      the edge $(v_{1,j}, v_{2,i}) \in E$ that connects $v_{1,j}$ to $v_{2,i}$ is labelled by $w(v_{1,j}, v_{2,i}) = 0$ if $x_j$ does not occur in $\delta_i$, 
		      by $w(v_{1,j}, v_{2,i}) = 1$ if $x_j$ is a positive literal of $\delta_i$, and by $w(v_{1,j}, v_{2,i}) = -1$ if $\neg x_j$ is a negative
		      literal of $\delta_i$.  The value of the bias $b_{2,i}$ is the number of negative literals in $\delta$, minus $1$. 
		      %Finally, the threshold $t_{v_{2,i}}$ associated with $v_{2,i}$ is set to $1$. % minus the number of negative literals in $\delta_i$. 
		      By construction, we have $o_{v_{2,i}}(\vec x) = 1$ if and only if $\vec x$ satisfies the clause $\delta_i$ associated with $v_{2,i}$.
		      Now, for every vertex $v_{2,i}$ ($i \in [k]$) of the second layer $V_2$, the edge $(v_{2,i}, v_{3,1}) \in E$ 
		      that connects $v_{2,i}$ to $v_{3,1}$ is labelled by $w(v_{2,i}, v_{3,1}) = 1$, and the bias $b_{3,1}$ is set to $-k$
		      %, and  the threshold $t_{v_{3,1}}$  associated with $v_{3,1}$ is set to $k$.
		      Accordingly, the output of $o_{v_{3,1}}$ of $P$ is $1$ if and only if every 
		      clause $\delta_i$ of $\rho$ is satisfied by $\vec x$, or stated otherwise, if and only if $\vec x$ is a model of $\rho$.
		      %\end{proof}

		      %\noindent {\bf Proof of Proposition \ref{prop:complexBNN}}
		      %\begin{proof}
		\item {\bf Binarized neural nets.}
		      With a \cnf\ formula $\rho = \bigwedge_{i=1}^k \delta_i$
		      over $X_n = \{x_1, \ldots, x_n\}$, such that $\rho$ does not contain any valid clause, we associate in polynomial time a BNN $N$
		      with $2n$ inputs in $\{-1, 1\}$ and an output value in $\{1, 2\}$ such that for any $\vec x \in \mathbb{B}^n$, $\vec x$ is a model of $\rho$
		      if and only if its translation $\mathit{transl}(\vec x) \in \{-1, 1\}^{2n}$ is classified as a positive instance by $N$ (i.e., the output value of $N$ is $2$).
		      
		      The translation mapping $\mathit{transl}: \mathbb{B}^n \rightarrow \{-1, 1\}^{2n}$ can be computed in linear time and is defined as follows:\\
		      %$$\mathit{transl}(x_1, \ldots, x_n) = (\underbrace{2x_1-1, 2x_1-1}_2, \ldots, \underbrace{2x_n-1, 2x_n-1}_2),$$ 

		      \begin{tabular}[t]{cc}
		           $\mathit{transl}(x_1, \ldots, x_n)$ & \\
			  $= (\underbrace{2x_1-1, 2x_1-1}_2, \ldots, \underbrace{2x_n-1, 2x_n-1}_2)$ &\\
		      \end{tabular}

		      i.e., for $i \in \{0, \ldots, n-1\}$,
		      the $(2i+1)^{th}$ coordinate and the $2(i+1)^{th}$ coordinate of $\mathit{transl}(x_1, \ldots, x_n)$ are equal to $2x_{i+1}-1$.
		      $N$ consists of a single intermediate block, i.e., $m=1$, so that the total number of blocks is $d = 2$. 
		      The number of inputs of the first block is $2n$ and the number of outputs
		      of this block is $k$, the number of clauses of $\rho$.

		      The key idea of our translation from \cnf\ to BNN is to consider each of the $k$ clauses $\delta$ of $\rho$ individually
		      and compute the difference between the number of literals of $\delta$ falsified by $\vec x$ and the number of literals of $\delta$ satisfied by $\vec x$,
		      which is noted as follows:
		      $$
			      \mathit{diff}(\delta, \vec x) = |\{\ell \in \delta : \vec x \models \overline{\ell}\}| - |\{\ell \in \delta : \vec x \models \ell\}|.
		      $$

		      Obviously enough, $\vec x$ does not satisfy $\delta$ precisely when $\mathit{diff}(\delta, \vec x)$ is the number of literals occurring in $\delta$.

		      The first operation realized by the BNN is a linear transformation. Thus one must define $\vec{A}_1$ and $\vec{b}_1$.
		      Basically, one wants to use this transformation to store in the output $\vec{y} = \vec{A}_1 \mathit{transl}(\vec x) + \vec{b}_1$ %\textcolor{green}\vec{b}_1$ some % JM: bf was missing
		      some information about the satisfaction of the clauses of $\rho$ by $\vec x$,
		      so that $y_i$ ($i \in [k]$) corresponds to the clause $\delta_i$ of $\rho$.
		      Because clauses are in general not built upon all variables, we need to add a mechanism to avoid considering
		      the variables that do not appear in a clause. This is achieved within $\mathit{transl}$ by duplicating the coordinates
		      of the input vector $\vec x$ (in addition to translating them from $\mathbb{B}$ to $\{-1, 1\}$).
		      Indeed, if a literal $\ell$ over $x \in X_n$ is not present in a clause $\delta_i$, its contribution to $y_i$ is expected to be $0$,
		      which is achieved by multiplying the two coordinates associated with $x$ in $\mathit{transl}(\vec x)$ by $-1$ and $1$, respectively.
		      If a literal $\ell$ over $x \in X_n$ occurs in $\delta_i$, we want its contribution to $y_i$ to be set to $1$ if the
		      literal falsifies the clause and to $-1$ if it satisfies the clause.
		      By summing up the contribution of each literal in that way, we obtain the expected result.
		      %But, because in our transformation, we multiply by two the elements of ${\bf x_1}$ associated 
		      %with each variable, we also need to multiply by two the contribution of each literal in $y_c$. 
		      Formally, the matrix $\vec{A}_1[i]$ for $i \in [k]$ is defined as:
		      $$
			      \vec{A}_1[i] = (\tau_{x_1}^1, \tau_{x_1}^2,\tau_{x_2}^1, \tau_{x_2}^2, \ldots,\tau_{x_n}^1, \tau_{x_n}^2) \text{, where}
		      $$
		      $$
			      \text{for each } j \in [n] \left\{
			      \begin{array}{ll}
				      \tau_{x_j}^1 = -\tau_{x_j}^2     & \text{if } x_j \notin \mathit{Var}(\delta_i)         \\
				      \tau_{x_j}^1 = \tau_{x_j}^2 = 1  & \text{if } \overline{x_j} \in \delta_i\\ %\mathit{Var}(\delta_i) \\
				      \tau_{x_j}^1 = \tau_{x_j}^2 = -1 & \text{if } x_j \in \delta_i\\ %\mathit{Var}(\delta_i)
			      \end{array}
			      \right.
		      $$

		      Then one sets $\vec{b}_1[i]$ ($i \in [k]$) to $-2 \times |\delta_i| + 1$.
		      Overall, we get that for every $i \in [k]$, $y_i = 1$ if the clause $\delta_i$ is falsified by $\vec x$  and $y_i < 0$ if $\delta_i$ is satisfied by $\vec x$.
		      As to batch normalization, for every $i \in [k]$, we set the parameters $\alpha_{1_i} = \nu_{1_i}$ to $1$, and $\mu_{1_i} = \gamma_{1_i}$ to $0$,
		      so that the output of the transformation coincides with its input. Finally, the binarization transformation takes place. Clearly enough, the output
		      of the internal block of $N$ is a vector ${\vec x'} = (x'_1, \ldots x'_k) \in \{-1, 1\}^k$ such that
		      for every $i \in [k]$, $x'_i = 1$ if $\vec x$ falsifies $\delta_i$ and $x'_i = -1$ if $\vec x$ satisfies $\delta_i$.

		      By construction, this vector ${\vec x'}$ is the input of the output block $O$ of $N$.
		      Let us recall that the output value $o$ of $O$ is the output of $N$, it is a value in $\{1, 2\}$ indicating whether or not the input $\mathit{transl}(\vec x)$ is
		      classified positively by $N$. % (in the positive case, the output is $2$).
		      The linear transformation used in $O$ is given by $\vec{A}_d \in \{-1, 1\}^{2 \times k}$ and $\vec{b}_d \in \mathbb{R}^2$ such that
		      $\vec{A}_d[1] = (\underbrace{1, \ldots, 1}_k)$ and $\vec{A}_d[2] = \underbrace{(-1, \ldots, -1)}_k$, while $\vec{b}_d[1] = 2k$ and $\vec{b}_d[2] = 1$.
		      As such, when the instance $\vec x$ satisfies $\rho$, it satisfies every clause of it and the coordinate $w_2$ of the output $\vec{w}$ of the linear transformation
		      ${\bf w} = \vec{A}_d {\vec x'} + \vec{b}_d$ is equal to $k+1$, while its coordinate $w_1$ is equal to $k$. Contrariwise,  when the instance $\vec x$
		      does not satisfy $\rho$, we have $w_2 \leq k$ and $w_1 \geq k+1$. As a consequence, the last transformation of $O$ (the ARGMAX layer)
		      will return $o = 2$ when $\vec x$ satisfies $\rho$ and $o = 1$ otherwise.
		      Thus, $N$ can be viewed as a representation of the \cnf\ formula $\rho$ modulo the translation mapping $\mathit{transl}$, in the sense $\forall \vec x \in \mathbb{B}^n$, 
		      $\rho(\vec x) = c$ if only if $N(\mathit{transl}(\vec x)) = c+1$. %The rest of the proof follows from this reduction.

	\end{itemize}
	
Finally, with each reduction given in the first part of the proof, we can associate another polynomial reduction where $\rho$ is a \dnf\ classifier.
This comes from two points: (1) the duality relating \cnf\ classifiers to \dnf\ classifiers stating that $\vec x \in \mathbb{B}^n$ is a positive instance of a concept represented by a \cnf\ classifier 
$\rho$ over $X_n$ if  and only if $\vec x$ is a negative instance of the (complementary) concept represented by a \dnf\ classifier $D$ that is equivalent to $\neg \rho$ (obviously, such a $D$ is computable from $\rho$ in linear time by applying De Morgan's laws); 
%(note that $D$ and $\rho$ depends on the same variables), 
and (2) the fact that all the XAI queries we have considered must be able to take account for both positive and negative instances. This concludes the proof.
%This completes the proof for  \dnf\ classifiers.
\end{proof}

The last two propositions thus show the existence of {\it a large computational intelligibility gap} between the families of classifiers at hand. Since
each of the $9$ XAI queries is tractable for the family of decision trees, {\it decision trees can be considered as highly intelligible in comparison to
the other families of classifiers considered in the paper}. At the other extremity of the spectrum, \dnf\ classifiers, decision lists, random forests, boosted
trees, Boolean multilayer perceptrons, and binarized neural nets appear as poorly intelligible since none of the $9$ XAI queries is tractable for any of those families.
% In between, one can find the family of $k$-\dnf\ classifiers which offers some XAI queries, but not all of them.
%\pierre{To be discussed further, since much remains to be done: (1) the partition can be refined by considering approximation techniques -- two {\sf NP}-hard optimization problems
%	can have quite different behaviour w.r.t. approximation, some of them are easily approximable while others are not. (2) It would be interesting to add to the picture
%	other families of classifiers, especially those corresponding to representation languages that are not complete for the Boolean functions, in the sense that some Boolean functions
%	cannot be associated with elements of those languages that represent them -- e.g., $k$-depth decision trees where $k$ is a constant. (3) Experiments are also needed
%	to determine whether the XAI queries we have considered are hard in practice. }

%\medskip

Notably, the results reported in the last two propositions differ significantly from those presented in a number of previous papers 
%,Audemardetal20}), 
where an equivalence-preserving polynomial-time translation (alias an encoding) from a given family \L\ of Boolean classifiers  
to \cnf\ formulae is looked for (see e.g., \cite{DBLP:conf/ijcai/Narodytska18,DBLP:conf/aaai/NarodytskaKRSW18,DBLP:conf/iclr/NarodytskaZGW20}).
%When defining such a translation targeting the \cnf\ language, t
Determining such translations permits  to take advantage of automated reasoning techniques
for addressing XAI queries, given that existing solvers for Boolean representations are most of the time based on the \cnf\ format.
Here, we have looked for polynomial-time translations from the language of \cnf\ formulae to the languages \L\ of the classifiers we focus on, 
in order to prove that the XAI queries are computationally hard whenever the input classifier is in \L. This is quite a different, yet complementary perspective.
Indeed, when such translations exist, leveraging \cnf\ encodings to address XAI queries 
%in an ``indirect'' way 
makes sense from a computational standpoint, i.e., it is not using a sledgehammer to crack a nut. However, 
those translations are not guaranteed to exist for every family \L\ of classifiers, so that it can be the case that an XAI query is tractable for the 
\L\ while being computationally hard for \cnf\ classifiers. 
Accordingly, our study shows that this is precisely what happens with decision trees. For this family of classifiers,
it is meaningful to develop algorithms for addressing XAI queries that are directly based on the representations at hand (decision trees), instead of 
designing \cnf\ encodings.
%No encoding into \cnf\ is actually required for handling those queries.

%%%%%%%%%%%%%%%%%%%%%%%%%%%%%%%%%%%%%%%%%%%%%%%%%%%%%%%%%%%%%%%%%%%%%%%%%%%%%%%%
\section{Conclusion}\label{sec:concl}
%%%%%%%%%%%%%%%%%%%%%%%%%%%%%%%%%%%%%%%%%%%%%%%%%%%%%%%%%%%%%%%%%%%%%%%%%%%%%%%%

In this paper, we have investigated from a computational perspective the intelligibility of 
several families of Boolean classifiers: decision trees, \dnf\ formulae,
decision lists, random forests, boosted trees, Boolean multilayer perceptrons, and binarized neural nets.
The computational intelligibility of a family of classifiers has been evaluated as the
set of XAI queries that are tractable when the classifier at hand belongs to the family. 
Considering a set of $9$ XAI queries as a base line,
we have shown the existence of a large computational intelligibility gap between the families of classifiers. Roughly speaking, the results obtained show that
though decision trees are highly intelligible, the other families of classifiers we have focused on are not intelligible at all.
%The interpretability of a model is often viewed as the ability to derive easily from it (short but possibly redundant) reasons explaining the predictions made. 
%This is the case for decision trees, but also for decision lists, which, in particular, look structurally much closer to decision trees than BNNs. 
%Thus, the sharp separation between decision trees and all the other models as to their computational intelligibility was quite surprising. 
This coheres with the commonly shared intuition that ``decision trees are interpretable and other machine learning classifiers are not’’, but more than that, 
our results give some formal ground to this intuition.

%The interpretability of a model is often viewed as the ability to derive easily from it (short but possibly redundant) reasons explaining the predictions made. This is the case for DTs, but also for DLs, which, in particular, look structurally much closer to DTs than BNNs. Thus, the sharp separation between DTs and all the other models  as to the tractability of the XAI queries was surprising for us. In few words, what makes our results valuable  is that they give some formal ground to the commonly shared intuition that ``decision trees are known as interpretable and other ML classifiers are not’’.

%\smallskip
This work completes the study \cite{Audemardetal20} that focuses on designing tractable cases for a superset of the $9$ XAI queries considered here, using
knowledge compilation techniques. The fact that each of the $9$ XAI queries is intractable ({\sf NP}-hard) when the Boolean classifier $\rho$ under consideration
is unconstrained %. This shows XAI queries as computationally demanding in the general case, and this 
justifies the need to look for specific representations of  circuits
into languages ensuring the tractability of those queries and for translation (``knowledge compilation'') techniques for turning
classifiers into representations from such tractable languages, as it has been done in \cite{Audemardetal20}. 
%\pierre{To be completed + perspectives.}

%\medskip
\smallskip

Various perspectives of research emerge from this paper. Notably, in the present study, we have focused on representation languages which are complete for propositional logic. 
In other words, such representation languages are expressive enough to cover the concept class $\mathcal F_n$ of all Boolean functions over $n$ variables.
From the viewpoint of computational learning theory, this means that the VC dimension \cite{Vapnik1974} of all families of Boolean classifiers considered in this paper is equal to $2^n$. 
This in turn implies that these families are \emph{not efficiently} PAC learnable \cite{Valiant1984} because their sample complexity is exponential in $n$.
In fact, it is well-known that the minimal size of any Boolean multilayer neural network representing all $n$-dimensional Boolean functions must be exponential in $n$ \cite[Theorem 20.2]{Shalev2014}.
So, in order to analyze the interpretability of common Boolean classifiers that can be trained using a reasonable amount of data samples, we need to 
focus on representation languages for which the VC dimension of the corresponding concept class is polynomial in $n$. 
As a prototypical example, for the class of decision lists defined over monomials of size at most $r$, the VC dimension is polynomial in $n$ when $r$ is constant; in fact, $r$-decision lists are efficiently PAC learnable \cite{Rivest87}.
Of course, not all representation classes with polynomial VC dimension admit a polynomial-time learning algorithm, but virtually all Boolean classifiers used in practice 
are defined from concept classes with polynomial sample complexity. Thus, it is clear that the computational intelligibility of incomplete classes 
is worth being studied. 

%\smallskip
Enlarging the set of XAI queries that are used for the intelligibility assessment is another dimension for further research. 
Among the computational queries that could be added to the intelligibility map is the ability (or not) to compute SHAP scores in a tractable way,
as investigated recently in \cite{DBLP:conf/aaai/ArenasBBM21,DBLP:conf/aaai/BroeckLSS21}.
Deriving more fine-grained complexity results (i.e., not restricted to {\sf NP}-hardness in the broad sense) and determining whether 
the answers to some XAI queries can be approximated efficiently (under some guarantees on the quality of the approximation achieved) would be useful as well.
Finally, experiments will be also needed to determine to which extent the XAI queries we have considered in the paper are hard to be addressed in practice.

%\newpage
\newpage
\section*{Acknowledgements}
Many thanks to the anonymous reviewers for their comments and insights. 
This work has benefited from the support of the AI Chair EXPE\textcolor{orange}{KC}TA\-TION (ANR-19-CHIA-0005-01) of the French National Research Agency (ANR).
It was also partially supported by TAILOR, a project funded by EU Horizon 2020 research and innovation programme under GA No 952215.

\bibliographystyle{kr}
\bibliography{biblio}

\end{document}